\theoremstyle{thmstyleone}%
\newtheorem{theorem}{Theorem}[section]
\newtheorem{proposition}{Proposition}[section]
\theoremstyle{thmstyletwo}%
\newtheorem{remark}{Remark}%
\theoremstyle{thmstylethree}%
\newcommand\dd{\mathrm{d}}
\newcommand\x{\bm{x}}
\newcommand\y{\bm{y}}
\newcommand\z{\bm{z}}
\title[Accelerating Particle-based Energetic Variational Inference]{Accelerating Particle-based Energetic Variational Inference}
\author[1]{Xuelian Bao}
\author[2]{Lulu Kang}
\author[3]{Chun Liu}
\author*[4]{Yiwei Wang}\email{yiweiw@ucr.edu}
\affil[1]{\orgdiv{School of Mathematics}, \orgname{South China University of Technology}, \orgaddress{\city{Guangzhou}, \country{China}}}
\affil[2]{\orgdiv{Department of Mathematics and Statistics}, \orgname{University of Massachusetts Amherst}, \orgaddress{\city{Amherst}, \state{MA}, \country{USA}}}
\affil[3]{\orgdiv{Department of Applied Mathematics}, \orgname{Illinois Institute of Technology}, \orgaddress{\city{Chicago}, \state{IL}, \country{USA}}}
\affil[4]{\orgdiv{Department of Mathematics}, \orgname{University of California Riverside}, \orgaddress{\city{Riverside}, \state{CA}, \country{USA}}}
\begin{document}

\abstract{In this work, we propose a new particle-based variational inference (ParVI) method for accelerating the Energetic Variational Inference with Implicit scheme (EVI-Im) introduced in Ref. \cite{wang2021particle}. Inspired by energy quadratization (EQ) and operator splitting techniques for gradient flows,  the proposed method efficiently drives particles towards the target distribution, while retaining a meaningful stability mechanism. Unlike EVI-Im, which employs the implicit Euler method to solve variational-preserving particle dynamics obtained from a ``discretization-then-variation'' approach for minimizing the Kullback--Leibler divergence, the proposed algorithm avoids repeated evaluation of inter-particle interaction terms within each time step, significantly reducing computational cost. The framework is also extensible to other gradient-based sampling techniques. Through several numerical experiments, we demonstrate that the proposed method achieves competitive performance compared with existing ParVI approaches, while offering advantages in efficiency and robustness in certain regimes.}

\keywords{Variational inference, gradient flow, partial energy quadratization, operator splitting, energy stability. }

\pacs[MSC Classification]{62G05, 65K10, 65L05}

\maketitle

\section{Introduction}

Many problems in machine learning and modern statistics can be formulated as estimating or sampling from a target distribution $\rho^*(\x)$, which is known up to an intractable normalizing constant.
Two popular classes of methods are Markov Chain Monte Carlo (MCMC) methods \cite{metropolis1953equation, hastings1970monte, geman1984stochastic, welling2011bayesian} and Variational Inference (VI) methods \cite{jordan1999introduction, neal1998view, wainwright2008graphical, blei2017variational}.

The idea of MCMC is to generate samples from the target distribution by constructing a Markov chain whose equilibrium distribution is the target distribution. 
Examples include Metropolis--Hastings algorithm \cite{metropolis1953equation, hastings1970monte}, Gibbs sampling \cite{geman1984stochastic, casella1992explaining}, Langevin Monte Carlo (LMC) \cite{rossky1978brownian, parisi1981correlation, roberts1996exponential, welling2011bayesian}, and Hamiltonian Monte Carlo (HMC) \cite{neal1993probabilistic, duane1987hybrid}. 
In contrast, VI reformulates the inference as an optimization problem and seeks a distribution $\rho$ that is closest to the target distribution $\rho^*$ in terms of a chosen divergence measure:
\begin{equation}\label{op_pro}
\rho^{\rm opt} = \mathop{\arg\min}_{\rho \in \mathcal{Q}} D(\rho || \rho^*).
\end{equation}
Here, $\mathcal{Q}$ is the admissible set that contains all feasible distributions, $D(p || q)$ is a dissimilarity function or a divergence measure that assesses the differences between two probability distributions $p$ and $q$. 
For Bayesian inference problems, $D(p || q)$ is often taken as Kullback--Leibler (KL) divergence \cite{blei2017variational}, given by
\[\text{KL}(\rho || \rho^*) = \int \rho \ln \left(  \frac{\rho}{\rho^*} \right) \dd x =  \int \rho \ln \rho + \rho V(x) \dd x\ ,\]
where $V(x) = - \ln \rho^*$. 
Note that the normalizing constant of $\rho^*$ does not influence the optimal solution and can therefore be omitted in the definition of $V(x)$, which overcomes the biggest challenge in sampling the posterior distribution for Bayesian inference.

\subsection{Particle-based Variational Inference}\label{subsec:parvi}
Many VI methods choose a parametric family of distributions as $\mathcal{Q}$, such as the mean-field approach \cite{blei2017variational} or neural-network-based methods \cite{papamakarios2021normalizing, rezende2015variational}.

However, significant progress has also been made in developing particle-based variational inference (ParVI) methods \cite{Chen2018, chen2019projected, liu2016stein, liu2017stein, liu2018riemannian,  liu2019understanding, maoutsa2020interacting}, such as Stein Variational Gradient Descent (SVGD) method \cite{liu2017stein, liu2016stein}. 
In these ParVI methods, the admissible set  $\mathcal{Q}$ is defined as a set of empirical measures with $N$ samples, or \emph{particles}, i.e., 
$$
\mathcal{Q} = \left\{ \rho_N(\x) ~|~  \rho_N = \frac{1}{N} \sum_{i=1}^N \delta(\x - \x_i) \right\},
$$ 
where $\x_i \in \mathbb{R}^n$ for $i=1,\ldots, N$ are the positions of particles, and $\delta$ is the Dirac delta function, which is a generalized function defined such that \(\int_{\mathbb{R}^n} \delta(\x - \x_i) f(\x) \, d\x = f(\x_i)\) for any smooth test function \(f(\x)\).

ParVI methods can be viewed as non-parametric variational inference approaches, where the goal is to find an optimal set of samples $\{ \x_i \}_{i=1}^N$ that minimizes $D(\rho_N || \rho^*)$ or its approximation, as $D(\rho_N || \rho^*)$ may not be well-defined for an empirical distribution $\rho_N$. 
ParVI methods face a key challenge: how to solve the non-convex optimization problem \eqref{op_pro}, where standard optimization approaches may converge slowly or fail to converge at all. 
To address this challenge, many ParVI methods employ some kind of particle dynamics to iteratively refine the particles' distribution, with the minimization carried out through solving a dynamical evolution equation for the particles $\{\x_i(t) \}_{i=1}^N$:
\begin{equation}\label{particle_v}
\dot{\x}_i(t) = {\bm v}_i (\x_1, \ldots, \x_N; \rho^*),
\end{equation}
where ${\bm v}_i (\x_1, \ldots, \x_N; \rho^*)$ represents the $i$-th particle's velocity, which also depends on the target distribution. 
The velocity often involves interaction terms among particles \cite{maoutsa2020interacting}.

Two main questions still remain in this framework. 
The first one is how to choose the velocity ${\bm v}_i$ appropriately to ensure convergence and efficiency. 
The second question is how to solve the continuous particle dynamics \eqref{particle_v}, which involves selecting a suitable temporal discretization. 
Recently, a framework called Energetic Variational Inference (EVI) was proposed in Ref. \cite{wang2021particle} and it answers both questions. 
This framework is inspired by the energetic variational approach used to model the dynamics of non-equilibrium thermodynamic systems \cite{EVAreview, giga2017variational}. 
The main idea of EVI is to formulate a continuous dynamics of minimizing $D(\rho || \rho^*)$ in the probability space through a continuous energy-dissipation law \cite{giga2017variational}. A similar idea was explored in Refs. \cite{E2020machine, Trillos2020}.
The particle dynamics in EVI is then derived using a ``discretization-then-variation'' approach that consists of two steps. 
In the first step, we discretize the continuous energy-dissipation law. 
In the second step, we apply the energetic variational approach at the particle level to obtain the particles' dynamics.
As an advantage of the ``discretization-then-variation'' approach, the dynamics at the particle level maintains the variational structure, which means the dynamics is a gradient flow in terms of particles and particles evolve in the direction of reducing the discretized energy. 
Applying various numerical techniques for gradient flows, we can obtain the corresponding robust and energy-stable VI algorithms.

As an example of the EVI framework, a new ParVI algorithm, termed EVI-Im, was developed in Ref. \cite{wang2021particle}.
With the KL divergence as the dissimilarity function, the following variational particle dynamics can be derived using EVI:  
\begin{equation}\label{Particle_ODE_1}
\dot{\x}_i = - \left( \frac{ \sum_{j=1}^N  \nabla_{\x_i} K_h(\x_i, \x_j)}{\sum_{j = 1}^N K_h(\x_i, \x_j) }  +  \sum_{k=1}^N \frac{\nabla_{\x_i} K_h(\x_k, \x_i)}{\sum_{j=1}^N K_h(\x_k, \x_j)}+ \nabla_{\x_i} V(\x_i)  \right), \quad i=1, 2, \cdots, N.
\end{equation}

Compared with other KL-divergence-based ParVI, the particle dynamics \eqref{Particle_ODE_1} is an $L^2$-gradient flow of $\{ \x_i \}_{i=1}^N$ associated with the free energy $\mathcal{F}_h$ defined by \cite{carrillo2019blob, reich2021fokker} 
\begin{equation}\label{F_h}
\mathcal{F}_h (\{\x_i\}_{i=1}^N)= \frac{1}N \sum_{i=1}^N \bigg{(} \ln\bigg{(}\frac{1}N  \sum_{j=1}^N K_h(\x_i, \x_j) \bigg{)} +V(\x_i)   \bigg{)}  \ ,
\end{equation}
which is an approximation of the KL-divergence. 
Here, $K_h$ is a regularization kernel with $h$ as the bandwidth tuning parameter. A typical choice of $K_h$ is the Gaussian kernel:
$$
K_h(\x_i, \x_j) = \frac{1}{(\sqrt{2\pi}h)^d}\exp\left(-\frac{|\x_i - \x_j|^2}{2h^2}\right)\ ,
$$
where $d$ is the dimension of the space. Thanks to the variational structure, applying the implicit Euler discretization to \eqref{Particle_ODE_1}, we obtain the optimization problem
\begin{equation}\label{Op_EVI_Im}
 \{ \x_i^{n+1} \}_{i=1}^N = \mathop{\arg\min}_{\{ \x_i \}_{i=1}^N} J_n (\{\x_i\}_{i=1}^N), \quad J_n (\{\x_i\}_{i=1}^N) = \frac{1}{2 \tau N} \sum_{i=1}^N \| \x_i - \x_i^n \|^2 + \mathcal{F}_h ( \{\x_i\}_{i=1}^N )
\end{equation}
at each iteration, where $\tau$ is the time step size for the implicit Euler discretization.
In practice, the optimization problem \eqref{Op_EVI_Im} can be solved by using the gradient descent with Barzilai--Borwein (BB) step size \cite{barzilai1988two} or the AdaGrad method \cite{duchi2011adaptive, Chen2018}. 
The numerical results in Ref. \cite{wang2021particle} demonstrate the strong performance of EVI-Im. 

One feature that distinguishes EVI-Im from other ParVI methods is that EVI-Im results from an $L^2$-gradient flow, which leads to more stable behavior in the sense that the particles are theoretically guaranteed to move towards the target distribution in each update. 
However, most ParVI methods, such as SVGD, are not $L^2$-gradient flows. Thus, the implicit Euler discretization will not lead to an optimization problem. 
In Ref. \cite{wang2021particle}, numerical examples have shown that SVGD is not as stable as EVI-Im. 
On the other hand, the need to solve a nonlinear optimization problem in each iteration is also a bottleneck of EVI-Im. 
In particular, one has to repeatedly evaluate the interaction terms $\nabla_{\x_i} K_h(\x_i,\x_j)$ and $K_h(\x_i,\x_j)$ among particles in \eqref{Particle_ODE_1} when evaluating the gradient term in each iteration, which constitutes a significant computational bottleneck, especially for a large number of particles. 
For these reasons, we focus on EVI-Im method in this article, and our goal is to overcome its computational bottleneck.

\subsection{Our Contribution}
To reduce the computational cost associated with EVI-Im, we propose a new algorithm called {\bf ImEQ (Implicit scheme with partial Energy Quadratization)}, which integrates the energy quadratization technique into gradient flows \cite{yangxf2016, zhaoj2017, shenj2018} to avoid repeatedly calculating the interaction terms in the original EVI-Im. 
This integration yields a new particle-based method. 
Unlike the Adaptive Gradient Descent with Energy (AEGD) method \cite{liu2020aegd}, an optimization method for machine learning problems, which essentially applies the energy quadratization to the entire energy and derives an explicit scheme, the proposed ImEQ method remains implicit and still requires solving an optimization problem in each iteration.
While the computational cost is slightly higher than that of AEGD method, ImEQ exhibits better stability across various examples.

We apply ImEQ method to various particle-based variational inference problems, demonstrating its robustness and efficiency. 
Various numerical results show that ImEQ method can achieve results comparable to EVI-Im while significantly reducing the CPU time when the number of particles is large. 
Additionally, compared to AEGD, the proposed method tends to be more robust in challenging examples, such as when the initial distribution is far from the target, owing to the implicit treatment of the potential part $H$, without introducing significant additional computational cost. We note that AEGD can perform very well when a sufficiently small step size is used, and ImEQ is not intended to uniformly outperform AEGD in every regime.
ImEQ can also be applied to other problems in machine learning, particularly those that can be interpreted as interacting particle systems, such as ensemble Kalman sampler \cite{chen2024bayesian}, particle-based generative models \cite{chen-wangEVI}, consensus-based optimization methods \cite{carrillo2021consensus}, and neural network training \cite{rotskoff2022trainability}.

The rest of this article is organized as follows. 
Section 2 gives a review of the EQ technique for the gradient flow and AEGD method.
In Section 3, we present ImEQ method and apply it to solve the particle ODE system \eqref{Particle_ODE_1}.
In Section 4, we demonstrate the efficiency and robustness of ImEQ method by comparing it with some existing ParVI methods on various synthetic and real-world problems for Bayesian inference. 
Section 5 concludes the article. 

\section{Preliminary: Energy Quadratization (EQ) for gradient flows}\label{sec:pre}

Since many existing optimization methods can be interpreted as temporal discretization of gradient flows, there has been increasing interest in developing effective optimization algorithms based on the continuous formulation of gradient flows \cite{E2020machine, liu2020aegd, zhang2025relaxed}. 
Recently, energetic quadratization approaches, including Invariant Energy Quadratization (IEQ) and Scalar Auxiliary Variable (SAV) methods, have been popular numerical techniques to solve various gradient flow problems \cite{yangxf2016, zhaoj2017, shenj2018}. 
These methods are also used in solving machine learning problems \cite{liu2020aegd, zhang2024energy}. 
For example, in a recent work \cite{liu2020aegd}, the authors adopted this approach and proposed a new optimization method, called {\bf Adaptive Gradient Descent with Energy (AEGD)}. 
Some extensions of AEGD further accelerating the convergence were then proposed in Refs. \cite{liu2022adaptive, liu2022dynamic, liu2023adaptive}.

Classical IEQ and SAV approaches are developed for infinite dimensional gradient flows, with spatial discretization typically applied after temporal discretization \cite{shenj2018, yangxf2016, zhaoj2017}. 
For the finite-dimensional case, the IEQ and SAV methods are essentially the same. 
To explain the ideas behind these energetic quadratization approaches, we consider the following finite-dimensional $L^2$-gradient flow
\begin{equation}\label{ODE_z}
  \dot{\z} = - \nabla F(\z), \quad \z \in \mathbb{R}^d \ ,
\end{equation}
which minimizes the ``free energy'' $F(\z)$ through the energy-dissipation law
\begin{equation}\label{GD_z}
\frac{\dd}{\dd t} F(\z) = - \| \dot{\z} \|^2\ ,
\end{equation}
where $\| \dot{\z} \|^2 =  \dot{\z} \cdot \dot{\z}$ is the standard $l^2$-norm in $\mathbb{R}^d$.
Assume $F(\z)$ is bounded from below. Then one can define $q(\z) = \sqrt{F(\z) + C}$, where $C$ is a pre-specified constant such that $F(\z) + C \geq 0,\ \forall \z \in \mathbb{R}^d$. Numerical tests in previous work show the performance of the resulting IEQ/SAV-type quadratization schemes is often not sensitive to the particular choice of $C$ (as long as $F(\z)+C\ge 0$) \cite{shenj2018, liu2020aegd}.
By the definition and the chain rule, we have $\dot{q}(\z) = \nabla q(\z) \cdot \dot{\z}$ and $\nabla q(\z) = \frac{1}{2 q(\z)} \nabla F(\z)$.
Hence, the gradient flow \eqref{ODE_z} is equivalent to 
\begin{equation}\label{ODE_qz}
    \begin{cases}
      & \dot{\z} = - 2 q(\z) \nabla q(\z), \\
      & \dot{q}(\z) = \nabla q(\z) \cdot \dot{\z}. \\
    \end{cases}
\end{equation}

To solve \eqref{ODE_z} numerically, the classical SAV method introduces a scalar auxiliary variable $r(t) = q(\z(t))$ \cite{shenj2018}. The variables $r(t)$ and $\z(t)$ then satisfy the system of ODEs
\begin{equation}\label{ODE_rqz}
    \begin{cases}
      & \dot{\z} = - 2 r(t) \nabla q(\z), \\
      & \dot{r} = \nabla q(\z)  \cdot \dot{\z}. \\
    \end{cases}
\end{equation}

Next, to obtain a practical algorithm, a semi-implicit temporal discretization is introduced to \eqref{ODE_rqz}: 
\begin{equation}\label{SAV}
  \left\{
  \begin{aligned}
    &  \dfrac{\z^{n+1} - \z^{n}}{\tau} = -  2 r^{n+1}  \nabla q (\z^{n}), \\
    & \dfrac{r^{n+1} - r^{n}}{\tau} = \nabla q (\z^{n}) \cdot (-  2 r^{n+1}   \nabla q (\z^{n}) ),\\
  \end{aligned}
\right. 
\end{equation}
where $\tau$ is the step size.
Although the numerical scheme looks complicated, it can be rewritten as a fully explicit scheme \cite{liu2020aegd}. Precisely, one can first solve $r^{n+1}$ using the second equation, which gives
\begin{equation}\label{r_n_1_form}
 r^{n+1} = \dfrac{r^n}{1 + 2 \tau  \|  \nabla q (\z^{n}) \|^2 }.
\end{equation}
Recall the previous definition of $q(\z)$ and the resulting formula of $\nabla q(\z)$.
With $r^{n+1}$, we can update $\z^{n+1}$ explicitly by
\begin{equation}\label{SAVz}
    \z^{n+1} = \z^{n} -  \tau (2 r^{n+1}  \nabla q (\z^{n})) = \z^{n} -   \tau  \frac{r^{n+1}}{\sqrt{ F(\z^n) + C}}  \nabla F(\z^n).
\end{equation}
The ratio $r^{n+1}/\sqrt{ F(\z^n) + C}$ can be interpreted as a scaling factor for the step size of the gradient descent dynamics. Therefore, this method was named as an adaptive gradient descent with energy (AEGD) in Ref. \cite{liu2020aegd}. 

An advantage of the scheme \eqref{SAVz} is that it is an explicit scheme but can achieve certain stability in the sense of a modified energy $\tilde{F}(r, \z) = r^2$  (see Proposition \ref{prop1}).
Here, $\tilde{F}(r, \z) = r^2$ represents a quadratic transformation of $F(\z)$, where $r$ serves as an auxiliary variable. Although $\tilde{F}$ is independent of $\z$ explicitly in this case, we use the notation $\tilde{F}(r, \z)$ to emphasize that $\z$ is the original variable. This technique is named ``energy quadratization'' as it transfers a general nonlinear function of $\z$ to a quadratic function of $r$ \cite{zhaoj2017}.

\begin{proposition}\label{prop1}
The numerical scheme \eqref{SAV} satisfies the following energy stability:
\begin{equation}
\tilde{F}^{n+1} - \tilde{F}^{n}
\leq - \frac{ 1 }{\tau} \|\z^{n+1} - \z^n \|^2\ ,
\end{equation}
with $\tilde{F}^n = (r^{n})^2$.
\end{proposition}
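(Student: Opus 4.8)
The plan is to derive the discrete energy law directly from the two equations in \eqref{SAV} by testing the auxiliary-variable equation against $2\tau r^{n+1}$, mirroring the standard energy estimate for SAV-type schemes. The governing identity I will exploit is the elementary relation $2a(a-b) = a^2 - b^2 + (a-b)^2$, which converts a product at the new time level into a difference of squares plus a nonnegative remainder — precisely the structure needed to produce $\tilde{F}^{n+1} - \tilde{F}^n = (r^{n+1})^2 - (r^n)^2$ together with a dissipative term.

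First I would multiply the second equation of \eqref{SAV} by $2\tau r^{n+1}$. Applying the algebraic identity above on the left-hand side with $a = r^{n+1}$ and $b = r^n$ yields
\[
(r^{n+1})^2 - (r^n)^2 + (r^{n+1} - r^n)^2 = 2\tau\, r^{n+1}\, \nabla q(\z^n) \cdot \left( -2 r^{n+1}\, \nabla q(\z^n) \right).
\]
The crucial step is then to rewrite the right-hand side using the first equation of \eqref{SAV}: since $-2\tau r^{n+1}\nabla q(\z^n) = \z^{n+1} - \z^n$, the factor $2\tau r^{n+1}\nabla q(\z^n)$ equals $-(\z^{n+1}-\z^n)$, while the remaining factor $-2 r^{n+1}\nabla q(\z^n)$ equals $(\z^{n+1}-\z^n)/\tau$. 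Substituting, the right-hand side collapses to $-\|\z^{n+1}-\z^n\|^2/\tau$.

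Finally I would isolate $(r^{n+1})^2 - (r^n)^2$ and discard the nonnegative term $(r^{n+1}-r^n)^2$, giving
\[
\tilde{F}^{n+1} - \tilde{F}^n = -\frac{1}{\tau}\|\z^{n+1}-\z^n\|^2 - (r^{n+1}-r^n)^2 \leq -\frac{1}{\tau}\|\z^{n+1}-\z^n\|^2,
\]
which is the claimed inequality. There is no serious analytic obstacle here: the scheme is linear in the unknowns and the estimate is purely algebraic. The only point requiring care is the bookkeeping in the substitution step — one must couple the two equations correctly (testing the $r$-equation with $2\tau r^{n+1}$ and eliminating $\nabla q(\z^n)$ through the $\z$-equation) so that the dissipation emerges exactly as $\|\z^{n+1}-\z^n\|^2/\tau$ and not some rescaled version. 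Tracking the factors of $\tau$ and the sign in this coupling is where a slip would most easily occur.
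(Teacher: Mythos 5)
Your proposal is correct and follows essentially the same route as the paper's proof: multiply the $r$-equation by $2r^{n+1}$ (you use $2\tau r^{n+1}$, a trivial rescaling), apply the identity $2a(a-b)=a^2-b^2+(a-b)^2$, and eliminate $\nabla q(\z^n)$ via the $\z$-update to obtain the dissipation term $-\|\z^{n+1}-\z^n\|^2/\tau$. The bookkeeping of the factors of $\tau$ is handled correctly, so there is no gap.
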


\begin{proof}
Multiplying the second equation in  \eqref{SAV} by $2 r^{n+1}$, and using the first equation in  \eqref{SAV}, we have
\begin{equation}
 2 (r^{n+1} - r^n) r^{n+1} = - \frac{1}{\tau} \| \z^{n+1} - \z^n \|^2.
\end{equation}
Using the identity $2 a^2 - 2 ab  = a^2 - b^2 + (a-b)^2$, we have 
\begin{equation}\label{Decay_r}
 (r^{n+1})^2 - (r^n)^2 = - \frac{ 1 }{\tau} \|\z^{n+1} - \z^n \|^2 -  (r^{n+1} - r^n)^2   \leq - \frac{ 1 }{\tau} \|\z^{n+1} - \z^n \|^2.
\end{equation}
\end{proof}

Since $\tilde{F}(r, \z) = r^2$ can be viewed as a certain approximation of the energy $F(\z)$, Eq. \eqref{Decay_r} provides a form of stability for the scheme. However, it is worth mentioning that the temporal discretization \eqref{SAV} cannot guarantee $r^{n+1} = q(\z^{n+1})$. 
Consequently, we do not have the energy stability in terms of the original energy $F(\z^{n+1}) \leq F(\z^n)$. 
We may need to choose a very small $\tau$ value such that the algorithm works \cite{shen2018convergence}.

\begin{remark}\label{remark:lag}
Here, we provide a more intuitive derivation or interpretation of system \eqref{ODE_rqz} through the Lagrange multiplier approach. 
The original gradient flow \eqref{GD_z} can be reformulated as a constrained gradient flow 
\begin{equation}\label{lagmulti}
  \frac{\dd}{\dd t} \tilde{F}(r, \z) = - \| \dot{\z} \|^2, \quad {\rm with \  constraint} \quad    r = q(\z).
\end{equation}
The constraint $r = q(\z)$ ensures that the energies $\tilde{F}(r, \z)$ and $F(\z)$ are essentially equivalent. 
We define the Lagrangian function 
\begin{equation}
    \mathcal{F}(r, \z; \lambda) = \tilde{F}(r, \z)- \lambda ( r - q(\z)) = r^2 - \lambda ( r - q(\z))\ ,
\end{equation}
where $\lambda$ is the Lagrange multiplier.
According to the method of Lagrange multipliers, the minimizer of $\tilde{F}(r, \z)$ under the constraint $r = q(\z)$ satisfies
\[
 \frac{\partial \mathcal{F}}{\partial r} =0, \quad  \frac{\partial \mathcal{F}}{\partial \z} =0, \text{ and } \frac{\partial \mathcal{F}}{\partial \lambda}=0.
\]
The first equation leads to 
\begin{equation}
   \frac{\partial \mathcal{F}}{\partial r} =  2r  - \lambda = 0 \quad \Rightarrow \quad \lambda  = 2r\ .
\end{equation}
The third equation is simply the constraint $r=q(\z)$. 
The second equation is $\partial \mathcal{F}/\partial \z=\lambda \nabla q(\z)=0$, which can be solved by the following equivalent gradient flow 
\begin{equation}\label{zgradientflow}
\dot{\z} = - \frac{\partial \mathcal{F}(r, \z; \lambda) }{\partial \z} = - \lambda \nabla q (\z) = - 2 r \nabla q(\z).
\end{equation}
To find the equation of $r(t)$, we take time-derivative of the constraint, which gives
\begin{equation}\label{zgradientflow2}
\dot{r} = \nabla q(\z) \cdot \dot{\z}.
\end{equation}
Eqs. \eqref{zgradientflow} and \eqref{zgradientflow2} are exactly the same as the system \eqref{ODE_rqz}. 
However, this alternative derivation of \eqref{ODE_rqz} does not change the fact that the constraint $r = q(\z)$ cannot be satisfied after the temporal discretization, which is also remarked above in Proposition \ref{prop1}. 
\end{remark}

Since the energetic quadratization technique or AEGD method is developed for general gradient flows, it is straightforward to apply it to the particle dynamics \eqref{Particle_ODE_1}. 
However, as shown in Section \ref{sec:examples}, the standard AEGD may fail to explore the target distribution efficiently and lead to an unsatisfactory result in several Bayesian inference problems, particularly when the initial distribution is far away from the target distribution (see Fig. \ref{star_500_1} for an example).
In this paper, we propose a new algorithm, which only applies energetic quadratization to some part of the free energy instead of its entirety.

\section{Accelerating EVI-Im via partial energy quadratization}

In this section, we introduce ImEQ method. The main idea is to apply energy quadratization specifically to the computationally expensive or non-convex components of the free energy, while maintaining an implicit treatment for the potential/convex parts to ensure robustness. By applying the ImEQ framework to the dynamic particle ODE system \eqref{Particle_ODE_1}, we derive a highly efficient algorithm for particle-based variational inference. This method significantly reduces the computational cost associated with the fully implicit EVI-Im scheme while preserving its stability advantages.

\subsection{ImEQ Discretization}
Consider a finite dimensional $L^2$-gradient flow
\begin{equation}\label{ODE_Fz}
  \dot{\z} = - \nabla F(\z), \quad \z \in \mathbb{R}^d, 
\end{equation}
where $F(\z)$ is the objective function. Assume that $F(\z)$ can be decomposed into two parts, i.e.,
\begin{equation}\label{Decomp}
    F(\z) = G(\z) + H(\z),
\end{equation}
where $G(\z)$ is a function that is bounded from below. 
Such a decomposition is certainly non-unique. 
To define an ImEQ algorithm, we only require $G$ to be bounded from below.
Moreover, to obtain existence/uniqueness of the implicit update and the discrete dissipation of the modified energy, we impose a convexity condition on $H$ in our analysis. 
From a computational perspective, we choose $G$ as the remaining component whose gradient is expensive to evaluate (e.g., interaction/entropy-surrogate terms).

Instead of introducing an energy quadratization to $F(\z)$ as in Ref. \cite{liu2020aegd}, we can only introduce an energy quadratization to $G(\z)$, i.e., let $q(\z) = \sqrt{G(\z) + C}$. 
Similar to the classical SAV method, we introduce an auxiliary variable $r(t) = q(\z(t))$. 
Following the derivation of the Lagrange multiplier approach in Remark \ref{remark:lag}, we reformulate the gradient flow as a constrained gradient flow
\begin{equation}
  \frac{\dd}{\dd t} \tilde{F}(\z, r) = - |\dot{\z}|^2, \quad {\rm with~constraint~}~  r = q(\z), 
\end{equation}
where $\tilde{F}(\z, r) = r^2 + H(\z)$ is the modified energy. 
As in Remark \ref{remark:lag}, we define a Lagrange multiplier and construct the following augmented functional:
\[
\hat{\mathcal{F}}(r, \z; \lambda) = \tilde{F}(\z, r)- \lambda (r - q(\z)) = r^2 + H(\z) - \lambda (r - q(\z)) ,
\]
where $\lambda$ is the Lagrange multiplier. 
Following the same derivation in Remark \ref{remark:lag}, that $r$ reaches equilibrium instantaneously, we obtain
\[
 2 r - \lambda = 0. 
\]
In the meantime, by taking the variation of $\z$, we have
\[
\dot{\z} = - (\nabla H(\z) + \lambda \nabla q(\z)) = - \nabla H(\z) - 2 r \nabla q(\z).
\]
In order to close the system, we take the time derivative of both sides of the constraint and obtain $\dot{r} = \nabla q(\z) \cdot \dot{\z}$.  
Therefore, the final system reads as follows,
\begin{equation}\label{ODE_ImEQ}
    \begin{cases}
      & \dot{\z} = - \nabla H(\z) - 2 r(t) \nabla q(\z), \\
      & \dot{r} = \nabla q(\z) \cdot  \dot{\z}. 
    \end{cases}
\end{equation}
A practical numerical scheme can be obtained by introducing a temporal discretization to \eqref{ODE_ImEQ} using a semi-implicit scheme: 
\begin{equation}\label{ImEQscheme}
\left\{
    \begin{aligned}
    & \frac{\z^{n+1} - \z^n}{\tau} = - \nabla H(\z^{n+1}) - 2 r^{n+1} \nabla q(\z^n) ,\\
    & \frac{r^{n+1} - r^n}{\tau} = \nabla q(\z^n) \cdot \frac{\z^{n+1} - \z^n}{\tau}.\\
    \end{aligned}
    \right.
\end{equation}
Although we no longer have an explicit update rule as in AEGD method \eqref{SAV}, the coupled system \eqref{ImEQscheme} can still be solved efficiently. 
Rewrite the second equation in \eqref{ImEQscheme}, and we obtain
\begin{equation}\label{eq_qq1}
r^{n+1} = r^n + \nabla q(\z^n) \cdot (\z^{n+1} - \z^n).
\end{equation}
Substituting (\ref{eq_qq1}) into the first equation of \eqref{ImEQscheme} and rearranging the terms, we have
\begin{equation}\label{eq_qq2}
    \frac{1}{\tau} \big[ {\sf I}  + 2 \tau  \nabla q (\z^n) \otimes  \nabla  q ( \z^n )  \big] (\z^{n+1} - \z^n) = - 2 r^n  \nabla  q(\z^n) - \nabla  H(\z^{n+1}),
\end{equation}
where ${\sf I}$ is the identity matrix and $\x \otimes \x=\x \x^\top$. Let ${\sf B}^n = {\sf I} +  2 \tau   \nabla q (\z^n) \otimes  \nabla  q ( \z^n )$. It is straightforward to show that ${\sf B}^n$ is positive-definite. The nonlinear equation (\ref{eq_qq2}) can be reformulated as an optimization problem
\begin{equation}\label{op_qq2}
\z^{n+1} = \mathop{\arg\min}_{\z} \tilde{J_n}(\z), \quad \tilde{J_n}(\z) = \bigg( \frac{1}{2 \tau} \| \z - \z^{n} \|^2_{{\sf B}^n} + H(\z) + (2 r^n  \nabla  q ( \z^n ), \z - \z^n)\bigg),  
\end{equation}
where $\| \z \|^2_{{\sf B}^n} =  \z^{\top} {{\sf B}^n} \z$ is the weighted $l^2$-norm. Indeed, the first-order optimality condition for \eqref{op_qq2} is
\[
\nabla \tilde{J}_n(\z)
= \frac{1}{\tau}{\sf B}^n(\z-\z^n) + \nabla H(\z) + 2r^n\nabla q(\z^n)=0.
\]
Evaluating this condition at $\z=\z^{n+1}$ gives \eqref{eq_qq2}.
Conversely, any solution of \eqref{eq_qq2} is a critical point of $\tilde{J}_n$. 
The proposed numerical scheme \eqref{ImEQscheme} is an implicit algorithm, combined with partial energy quadratization. Therefore, we refer to it as ImEQ method. 

\begin{remark}
The scheme \eqref{eq_qq2} reduces to AEGD scheme \eqref{SAV} if $H(\z) = 0$. Indeed, notice that
\begin{equation}
( {\sf I}  + 2 \tau  \nabla q (\z^n) \otimes  \nabla  q ( \z^n )  )^{-1} =  {\sf I} - \frac{ 2 \tau 
(\nabla q (\z^n)) (\nabla q (\z^n))^\top }{ 1 + 2 \tau |\nabla q (\z^n)|^2 }. 
\end{equation}
Substituting into \eqref{eq_qq2}, we have
\begin{equation}
\begin{aligned}
\frac{1}{\tau} 
(\z^{n+1} - \z^n) & = - \frac{2 r^n}{1 + 2 \tau |\nabla q (\z^n)|^2} \nabla q(\z^n) - \left( {\sf I} - \frac{ 2 \tau ( \nabla q (\z^n)) (\nabla q (\z^n))^{\rm T} 
}{ 1 + 2 \tau   |\nabla q (\z^n)|^2 }  \right) \nabla H(\z^{n+1}), 
\end{aligned}
\end{equation}
which reduces to \eqref{SAVz} since $H(\z) = 0$ and (\ref{r_n_1_form}) holds.
\end{remark}

\begin{theorem}\label{thm:unique_solvability}
Assume $H$ is twice differentiable:
\begin{itemize}
\item \textbf{Case 1} If $H$ is convex, then for any $\tau>0$ the objective $\tilde{J}_n$ in~\eqref{op_qq2} is strongly convex. Hence~\eqref{op_qq2} admits a unique minimizer $\z^{n+1}\in\mathbb{R}^d$, equivalently~\eqref{eq_qq2} has a unique solution.
\item \textbf{Case 2} If $H$ satisfies $\nabla^2 H(\z)\succeq -L\,{\sf I}$ for all $\z\in\mathbb{R}^d$,
then $\tilde{J}_n$ is strongly convex and admits a unique minimizer provided that $\tau < \frac{1}{L}.$
\end{itemize}
\end{theorem}

\begin{proof}
The Hessian of $\tilde{J}_n$ is
\[
\nabla^2 \tilde{J}_n(\z)=\frac{1}{\tau}{\sf B}^n + \nabla^2 H(\z).
\]
Since ${\sf B}^n \succeq {\sf I}$, we have $\frac1{\tau}{\sf B}^n \succeq \frac1{\tau}{\sf I}$.

\smallskip
\noindent
\textbf{Case 1.} If $H$ is convex, then $\nabla^2 H(\z)\succeq 0$ and thus
\[
\nabla^2 \tilde{J}_n(\z)\succeq \frac{1}{\tau}{\sf B}^n \succ 0,
\]
so $\tilde{J}_n$ is strongly convex and has a unique minimizer.

\smallskip
\noindent
\textbf{Case 2.} If $\nabla^2 H(\z)\succeq -L\,{\sf I}$ holds, then
\[
\nabla^2 \tilde{J}_n(\z)\succeq \frac{1}{\tau}{\sf I} - L{\sf I}
= \Big(\frac{1}{\tau}-L\Big){\sf I}.
\]
Therefore, when $\tau<1/L$, we have $\nabla^2\tilde{J}_n(\z)\succ 0$ for all $\z$, and $\tilde{J}_n$ is strongly convex, hence admits a unique minimizer. 
\end{proof}

\begin{theorem}\label{thm:stability}
Assume $H$ is convex and twice differentiable. Let $\{(\z^n, r^n)\}$ be the sequence generated by ImEQ scheme \eqref{ImEQscheme}. For any time step $\tau > 0$, the scheme is unconditionally stable with respect to the modified energy $\tilde{F}(\z, r) = r^2 + H(\z)$, in the sense that
\begin{equation}\label{ImEQenergydecay}
    \tilde{F}(\z^{n+1}, r^{n+1}) \leq \tilde{F}(\z^n, r^n).
\end{equation}
Specifically, the following energy dissipation law holds:
\begin{equation}\label{dissipation_law}
    \tilde{F}(\z^{n+1}, r^{n+1}) - \tilde{F}(\z^n, r^n) = - (r^{n+1} - r^n)^2 - \frac{1}{\tau} \|\z^{n+1} - \z^n\|^2 - \mathcal{D}_H(\z^{n+1}, \z^n) \leq 0,
\end{equation}
where $\mathcal{D}_H(\z^{n+1}, \z^n) = \frac{1}{2} (\z^{n+1} - \z^n)^\top \nabla^2 H(\xi) (\z^{n+1} - \z^n) \geq 0$ for some $\xi$ between $\z^n$ and $\z^{n+1}$.
\end{theorem}

\begin{proof}
Multiplying the first equation of \eqref{ImEQscheme} by $(\z^{n+1} - \z^n)$, we obtain:
\begin{equation}\label{pf1}
    \frac{1}{\tau} \|\z^{n+1} - \z^n\|^2 = - \nabla H(\z^{n+1}) \cdot (\z^{n+1} - \z^n) - 2r^{n+1} \nabla q(\z^n) \cdot (\z^{n+1} - \z^n).
\end{equation}
From the second equation of \eqref{ImEQscheme}, we have $\nabla q(\z^n) \cdot (\z^{n+1} - \z^n) = r^{n+1} - r^n$. Substituting this into \eqref{pf1} yields:
\begin{equation}\label{pf2}
    \frac{1}{\tau} \|\z^{n+1} - \z^n\|^2 = - \nabla H(\z^{n+1}) \cdot (\z^{n+1} - \z^n) - 2r^{n+1} (r^{n+1} - r^n).
\end{equation}
Using the identity $2b(b-a) = b^2 - a^2 + (b-a)^2$, we have $2r^{n+1}(r^{n+1}-r^n) = (r^{n+1})^2 - (r^n)^2 + (r^{n+1}-r^n)^2$. 
Meanwhile, the Taylor expansion of $H(\z)$ at $\z^{n+1}$ gives:
\begin{equation*}
    H(\z^n) = H(\z^{n+1}) + \nabla H(\z^{n+1}) \cdot (\z^n - \z^{n+1}) + \frac{1}{2} (\z^n - \z^{n+1})^\top \nabla^2 H(\xi) (\z^n - \z^{n+1}),
\end{equation*}
which implies $-\nabla H(\z^{n+1}) \cdot (\z^{n+1} - \z^n) = H(\z^{n+1}) - H(\z^n) + \mathcal{D}_H(\z^{n+1}, \z^n)$. 
Substituting these into \eqref{pf2}, we arrive at:
\begin{equation*}
    \frac{1}{\tau} \|\z^{n+1} - \z^n\|^2 = H(\z^{n+1}) - H(\z^n) + \mathcal{D}_H(\z^{n+1}, \z^n) - \left( (r^{n+1})^2 - (r^n)^2 + (r^{n+1}-r^n)^2 \right).
\end{equation*}
Rearranging the terms leads to the dissipation law \eqref{dissipation_law}. Since $H$ is convex, $\nabla^2 H \succeq 0$, all terms on the right side of the rearranged equation are non-positive, thus $\tilde{F}(\z^{n+1}, r^{n+1}) - \tilde{F}(\z^n, r^n) \leq 0$.
\end{proof}

As in AEGD/SAV-type schemes, the pointwise constraint $r^n=\sqrt{G(\z^n)+C}$ is not enforced during the time-discrete update and may deviate at finite step sizes. In practice, a small step $\tau$ is required such that the mismatch is controlled, so the discrete dynamics stay close to the interacting-particle gradient flow.

More precisely, let
\(
\delta^n:=r^n-q(\z^n)\ .
\)
From~\eqref{ImEQscheme} we have
\(
r^{n+1}=r^n+\nabla q(\z^n)\cdot(\z^{n+1}-\z^n).
\)
On the other hand, by a Taylor expansion, we have
\[
q(\z^{n+1})=q(\z^n)+\nabla q(\z^n)\cdot(\z^{n+1}-\z^n)
+\frac12 (\z^{n+1}-\z^n)^{\top}\nabla^2 q(\xi^n)(\z^{n+1}-\z^n),
\]
for some $\xi^n$ on the segment between $\z^n$ and $\z^{n+1}$.
Subtracting the two relations gives
\[
\delta^{n+1}=\delta^n-\frac12 (\z^{n+1}-\z^n)^{\top}\nabla^2 q(\xi^n)(\z^{n+1}-\z^n).
\]
Thus, if the iterates stay in a bounded region where $\nabla^2 q$ is bounded and $\|\z^{n+1}-\z^n\|=O(\tau)$, the per-step change of $\delta^n$ is $O(\tau^2)$, and over a finite time horizon this suggests $|\delta^n|=O(\tau)$.

The above control also clarifies the dynamic consistency of ImEQ.
Using $2q(\z)\nabla q(\z)=\nabla G(\z)$ and the identity
\(
r^{n+1}-q(\z^n) = \delta^{n+1}
+\bigl(q(\z^{n+1})-q(\z^n)\bigr),
\)
the $\z$-update in~\eqref{ImEQscheme} can be viewed as a perturbation of the standard first-order IMEX (semi-implicit) discretization
\[
\frac{\z^{n+1}-\z^n}{\tau}=-\nabla H(\z^{n+1})-\nabla G(\z^n).
\]
Specifically, the deviation from IMEX consists of (i) a term proportional to the constraint mismatch $\delta^{n+1}$ and
(ii) a freezing term proportional to $q(\z^{n+1})-q(\z^n)$.
Under boundedness of $\nabla q$ and in the small-step regime where $|\delta^{n+1}|=O(\tau)$ and $|q(\z^{n+1})-q(\z^n)|=O(\tau)$, the overall perturbation is $O(\tau)$, and therefore ImEQ is first-order consistent with the underlying gradient-flow dynamics as $\tau\to0$.

The modified-energy stability in Theorem~\ref{thm:stability} controls $\tilde F(\z,r)=H(\z)+r^2$, whereas the original objective is
$F(\z)=H(\z)+G(\z)=H(\z)+q(\z)^2-C$.
Their difference satisfies
\[
\tilde F(\z^n,r^n)-(F(\z^n)+C)=(r^n)^2-q(\z^n)^2=(r^n-q(\z^n))(r^n+q(\z^n)).
\]
Hence, if the mismatch $|r^n-q(\z^n)|$ remains small and $r^n,q(\z^n)$ remain bounded along the iterates, then $\tilde F(\z^n,r^n)$ and $F(\z^n)+C$ stay close.
In particular, in the small-step regime where $|r^n-q(\z^n)|=O(\tau)$, monotone decay of $\tilde F$ implies that $F(\z^n)$ is \emph{approximately} decreasing up to a controlled $O(\tau)$ discrepancy.
This provides an explanation for why ImEQ behaves similarly to energy-stable implicit schemes when $\tau$ is sufficiently small, while for large $\tau$ the mismatch can grow and the discrete dynamics may deviate from the intended objective.

A similar mismatch-based viewpoint also applies to AEGD. The key practical difference is that ImEQ treats the convex potential part $H$ implicitly, which helps control the increments $\|\z^{n+1}-\z^n\|$ and thereby keeps the mismatch $\delta^n$ better behaved. In contrast, for AEGD the corresponding updates are typically more explicit, so $\|\z^{n+1}-\z^n\|$ can be harder to control for the same step size, leading to a more restrictive small-step regime in practice. A fully quantitative characterization of such step-size conditions in terms of $H$ and $G$ is beyond the scope of this work and will be pursued in future analysis. In the numerical section, we provide evidence that, in the Double-banana example, ImEQ remains stable for step sizes up to about $\tau\le 10^{-2}$, whereas AEGD typically requires a smaller step size on the order of $\tau\le 10^{-3}$ to exhibit comparable stability and convergence behavior.

\begin{theorem}
Under the same assumptions as Theorem \ref{thm:stability}, if we further assume that $H(\mathbf{z})$ is bounded from below, the sequence $\{(\mathbf{z}^n, r^n)\}$ generated by ImEQ scheme satisfies:
\begin{enumerate}
    \item The modified energy sequence $\{\tilde{F}(\mathbf{z}^n, r^n)\}_{n=0}^\infty$ is monotonically decreasing and converges to a finite value $F^*$.
    \item The successive difference of the iterates vanishes:
    \begin{equation}
        \lim_{n \to \infty} \|\mathbf{z}^{n+1} - \mathbf{z}^n\| = 0.
    \end{equation}
\end{enumerate}
\end{theorem}

\begin{proof}
According to the result proved in Theorem \ref{thm:stability}, $\tilde{F}(\mathbf{z}^n, r^n)$ is a monotonically decreasing sequence. Since $H(\mathbf{z})$ and $G(\mathbf{z})$ are bounded below, $\tilde{F}$ is bounded below. By the Monotone Convergence Theorem, $\tilde{F}(\mathbf{z}^n, r^n) \to F^*$ as $n \to \infty$.

From the dissipation inequality, we can derive:
\begin{equation*}
    \frac{1}{\tau} \|\mathbf{z}^{n+1} - \mathbf{z}^n\|^2 \leq \tilde{F}(\mathbf{z}^n, r^n) - \tilde{F}(\mathbf{z}^{n+1}, r^{n+1}).
\end{equation*}
Summing this inequality from $n=0$ to $N$:
\begin{equation*}
    \frac{1}{\tau} \sum_{n=0}^{N} \|\mathbf{z}^{n+1} - \mathbf{z}^n\|^2 \leq \tilde{F}(\mathbf{z}^0, r^0) - \tilde{F}(\mathbf{z}^{N+1}, r^{N+1}) \leq \tilde{F}(\mathbf{z}^0, r^0) - F^*.
\end{equation*}
As $N \to \infty$, the series on the left side converges, which implies that the general term must go to zero:
\begin{equation*}
    \lim_{n \to \infty} \|\mathbf{z}^{n+1} - \mathbf{z}^n\|^2 = 0.
\end{equation*}
This concludes the proof.
\end{proof}

\begin{remark}
As in AEGD method, the constraint $r^n = \sqrt{G(\z^{n}) + C}$ no longer holds due to temporal discretization. 
   Consequently, we need to set $\tau$ to be sufficiently small to make the algorithm converge in practice. 
   Numerical tests in the next section show that for ImEQ, the step size needs to be smaller than that for EVI-Im. 
   However, the step size in ImEQ can still be larger than that used in AEGD, meaning it can converge faster than AEGD. 
\end{remark}

\subsection{Application to the Interacting Particle System \eqref{Particle_ODE_1}}

Next, we apply the proposed ImEQ method to the interacting particle system \eqref{Particle_ODE_1}. We first decompose $\mathcal{F}_h (\{\x_i\}_{i=1}^N)$ by setting $G(\{\x_i\}_{i=1}^N)$ as the interaction part and $H(\{\x_i\}_{i=1}^N)$ as the potential part, i.e., 
\begin{equation}\label{decomp_ImEQ}
G = \frac{1}N \sum_{i=1}^N \ln\bigg{(}\frac{1}N  \sum_{j=1}^N K_h(\x_i, \x_j) \bigg{)}, \  \mbox{and }  H = \frac{1}N \sum_{i=1}^N V(\x_i). 
\end{equation}
Since the Gaussian kernel $K_h$ is positive and bounded, the interaction term $G(\mathbf{z})$ is bounded from below, ensuring that the auxiliary variable $q = \sqrt{G(\mathbf{z}) + C}$ is well-defined for a sufficiently large constant $C$. Moreover, $H$ is convex provided that $V$ is convex. This convexity assumption is natural from the viewpoint of the mean-field limit of \eqref{Particle_ODE_1}, which leads to the Fokker--Planck equation $\rho_t = \nabla \cdot ( \nabla \rho + \rho \nabla V)$. The convergence-to-equilibrium property of the Fokker--Planck equation often relies on a convexity assumption on $V$. Specifically, the Bakry--\'Emery curvature condition $\nabla^2 V \succeq \lambda \mathbf{I}$ (for $\lambda > 0$) implies the Logarithmic Sobolev Inequality (LSI), which guarantees exponential convergence to equilibrium in Kullback--Leibler divergence \cite{markowich2000trend}.

The decomposition \eqref{decomp_ImEQ} also often improves computational efficiency.
As mentioned in Section \ref{subsec:parvi}, the computational bottleneck of solving the particle system \eqref{Particle_ODE_1} via the implicit Euler (EVI-Im) lies in that we have to evaluate the interaction terms
\begin{equation}\label{ineraction}
\frac{ \sum_{j=1}^N  \nabla_{\x_i} K_h(\x_i, \x_j)}{\sum_{j = 1}^N K_h(\x_i, \x_j) }   +  \sum_{k=1}^N \frac{\nabla_{\x_i} K_h(\x_k, \x_i)}{\sum_{j=1}^N K_h(\x_k, \x_j)}
\end{equation}
frequently in solving the optimization problem \eqref{Op_EVI_Im}. 
In ImEQ, we only need to evaluate the interaction terms once to compute $r^n$ at each time step. 
Moreover, the optimization problem (\ref{op_qq2}) is often easier to solve than that of EVI-Im. 
Consequently, for the same temporal step size, ImEQ can be more than $K$ times faster, where $K$ is the maximum number of iterations inside the optimization procedure at each time step (see Table~\ref{comparediffN} for numerical evidence). 
It is worth pointing out that the step-size in EVI-Im can be significantly larger than that in ImEQ according to the numerical tests, as EVI-Im is unconditionally energy-stable while ImEQ is energy stable only in the sense of the modified energy $\tilde{\mathcal{F}}(\z, r) = r^2 + H(\z)$. For instance, in the Double-banana example in Fig.~\ref{doublebanana}(a), we report results for EVI-Im with $\tau=0.1$ and for ImEQ with $\tau=0.01$.

\begin{remark}
Similar partial energy quadratization approaches have been used in the original IEQ and SAV methods \cite{yangxf2016, shenj2018} for infinite dimensional gradient flows. These methods often consider a free energy
\begin{equation}
\mathcal{F}[\phi] = \int_{\Omega} \phi \mathcal{L} \phi + F(\phi) \dd \x,
\end{equation}
where $\phi(\x, t)$ is an unknown function, $\mathcal{L}$ is a linear, self-adjoint, positive-definite operator, $F(\phi)$ is a nonlinear function of $\phi$, and the  energy quadratization is only applied to $F(\phi)$. We emphasize that for our free energy $\mathcal{F}_h (\{ \x_i \}_{i=1}^N)$, there exist no linear parts. So it is important to choose a suitable decomposition.
Moreover,  different from the most classical IEQ/SAV methods that focus on developing a linear scheme for the underlying PDE, we apply the EQ technique to the interaction terms, aiming to reduce the computational cost. 

\end{remark}

\section{Numerical Examples}\label{sec:examples}
In this section, we demonstrate the effectiveness and robustness of the proposed ImEQ method for ParVI on various synthetic and real-world problems.  We will compare the proposed method with the following existing methods:
\begin{itemize}
\item Blob method \cite{Chen2018}, which minimizes the free energy \eqref{F_h} using the AdaGrad method.
\item EVI-Im method \cite{wang2021particle}, which uses implicit Euler to solve  \eqref{Particle_ODE_1}.
\item AEGD method \cite{liu2020aegd}.
\end{itemize}
Additionally, we use the Stein Variational Gradient Descent (SVGD) method \cite{wang2019stein}, the most widely used ParVI algorithm, as a benchmark for Bayesian logistic regression and Bayesian neural network problems. 

In kernel-based particle variational inference methods, the choice of the kernel bandwidth $h$ plays a critical role: it controls the locality of particle interactions and directly affects how accurately the induced interaction term approximates the underlying score function ($\nabla \ln \rho$). 
In the literature, common bandwidth choices include the median heuristic \cite{liu2016stein}, as well as rule-of-thumb bandwidth formulas motivated by asymptotic optimality of kernel estimators \cite{wibisono2024optimal}. 
For the Gaussian kernel used in this paper, the theoretical analysis suggests a scaling of the form
\begin{equation}\label{eq:h_scaling_guideline}
h_N = c\, \ell_N \, N^{-1/(d+4)},
\end{equation}
where $\ell_N$ is the median of pairwise distances and $c$ is a constant to be determined by numerical trials.

In our formulation, however, $h$ 
enters the discrete free energy $\mathcal{F}_h$ explicitly.  Consequently, adjusting $h$ dynamically during the iteration process would result in a time-varying energy functional, which complicates the interpretation of the gradient flow and the convergence to a fixed stationary distribution. To ensure mathematical consistency and to maintain the stability properties established for ImEQ scheme, we keep $h$ constant throughout all main experiments. The specific value of $h$ is selected based on the scaling guideline in \eqref{eq:h_scaling_guideline} and several trials.

\subsection{Toy examples}
We begin by testing ImEQ method on three toy examples, which are commonly used as benchmark tests in previous studies \cite{Chen2018, liu2019understanding, rezende2015variational}. The three target distributions are as follows:
\begin{itemize}
\item {\bf Double-banana-shaped target distribution:} 
$$
\rho(\x) \propto \exp \bigg\{ -\frac{1}{2} |\x|^2 - \frac{1}{2} \bigg( \ln [ x_1^2 + 100 (x_2 - x_1^2)^2] - \ln 30 \bigg)^2 \bigg \}
$$
with $\x = (x_1, x_2)$, adapted from Refs. \cite{liu2019understanding, rezende2015variational}. 

\item  {\bf Star-shape target distribution:} 
A five-component Gaussian mixture \cite{wang2019stein}:
\begin{equation}\label{stardistribution}
\rho(\x) \propto \frac{1}{5}\sum_{i=1}^5 N(\bm x |\bm \mu_i,\bm \Sigma_i), 
\end{equation}
where
\[
\bm \mu_i= 
\begin{pmatrix}
\cos\left(\frac{2\pi}{5}\right) & -\sin\left(\frac{2\pi}{5}\right)\\
\sin\left(\frac{2\pi}{5}\right) & \cos\left(\frac{2\pi}{5}\right)
\end{pmatrix}^{i-1}
\begin{pmatrix} 1.5 \\ 0 \end{pmatrix},~~\bm \Sigma_i= \begin{pmatrix}
\cos\left(\frac{2\pi}{5}\right) & -\sin\left(\frac{2\pi}{5}\right)\\
\sin\left(\frac{2\pi}{5}\right) & \cos\left(\frac{2\pi}{5}\right) 
\end{pmatrix}
^{i-1}
\begin{pmatrix}
1 & 0 \\
0 & 0.01 \\
\end{pmatrix}.
\]

\item  
{\bf Eight-component Gaussian mixture \cite{chen-wangEVI}:}   
\[
\rho(\x) \propto \frac{1}{8}\sum_{i=1}^8 N(\bm x|\bm \mu_i,\bm \Sigma),
\]
where $\bm \mu_1=(0,4)$, $\bm \mu_2=(2.8,2.8)$, $\bm \mu_3=(4,0)$, $\bm \mu_4=(-2.8,2.8)$, $\bm \mu_5=(-4,0)$, $\bm \mu_6=(-2.8,-2.8)$, $\bm \mu_7=(0,-4)$, $\bm \mu_8=(2.8,-2.8)$, and $\bm \Sigma=\left[\begin{array}{rr} 0.2 & 0 \\ 0 & 0.2 \end{array}\right]$. 

\end{itemize}
All target distributions are known up to the normalizing constant. 
Throughout this section, we refer to the step size $\tau$ used in EVI-Im and ImEQ as the learning rate, denoted by ${\rm lr}$, to be consistent with the terminology used in other methods.

\begin{figure*}[!ht]
  \centering
\includegraphics[width= \linewidth] {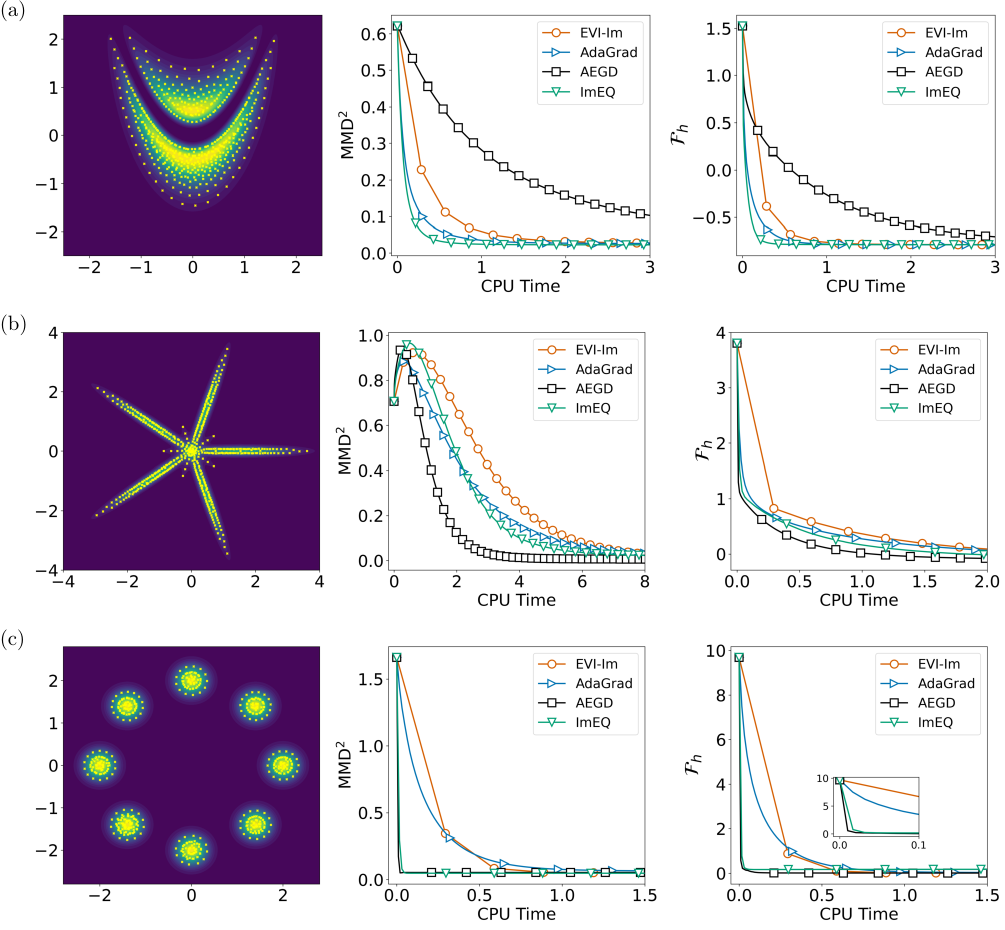}
 \caption{ "Double-banana" (a), "Star" (b) and "Eight-component" (c) cases: particles obtained by ImEQ method after 200 iterations (left); plot of MMD$^2$ (middle) and $\mathcal{F}_h$ (right) with respect to CPU time for different methods. 
 For AdaGrad and EVI-Im methods, ${\rm lr}=0.1$ in all cases. 
 In the case of ImEQ method, ${\rm lr}=0.01$ for "Double-banana" and "Star" cases, while ${\rm lr}=0.1$ for "Eight-component" case. For AEGD method, ${\rm lr}=0.001$ for "Double-banana" case, ${\rm lr}=0.01$ for "Star" case and ${\rm lr}=0.1$ for "Eight-component" case. Here, markers are displayed every 20 data points for AdaGrad, AEGD, and ImEQ curves.
 }
\label{doublebanana}
\end{figure*}

For the toy examples, we initialize the particles by sampling from a two-dimensional standard Gaussian distribution.  
The number of particles is $N = 500$, and the kernel bandwidth is $h = 0.1$. Since our primary focus is on comparing the optimization performance of each method, we do not attempt a systematic study of bandwidth tuning. Throughout the experiments, we choose $h$ using the scaling guideline \eqref{eq:h_scaling_guideline} with a small number of pilot runs, and then keep $h$ fixed.
For implicit methods (EVI-Im and ImEQ), we use the gradient descent with the Barzilai-Borwein method \cite{barzilai1988two} to solve the optimization problem at each time step. 
The maximum number of iterations for the inner optimization loop is set to $K=20$, as obtaining the optimal solution at the initial stage of these methods is unnecessary when we are primarily interested in the equilibrium rather than the dynamics. Additionally, for energetic quadratization methods (AEGD and ImEQ), we set the constant $C = 5$.  
We have experimented with different learning rates $\rm lr = 0.1, 0.01, 0.001, 10^{-4}$ and selected the optimal learning rate for each method based on performance in our tests.

To compare the performance of the above methods, we track two complementary quantities as functions of CPU time. 
First, we plot the evolution of the discrete free energy $\mathcal{F}_h(\{\x_i\}_{i=1}^N)$, since $\mathcal{F}_h$ is the discrete objective associated with the interacting-particle gradient flow and reflects the optimization behavior of the methods. 
In particular, for ImEQ, the theoretical analysis is formulated in terms of a modified energy, which serves as a surrogate of $\mathcal{F}_h$ in the small-step regime. 
Therefore, the evolution of $\mathcal{F}_h$ provides a practical way to assess how the numerical iteration behaves with respect to the original objective.

Second, we plot the evolution of the squared Maximum Mean Discrepancy (MMD$^2$), which serves as an additional metric for comparing two empirical distributions \cite{gretton2012kernel, arbel2019maximum}:
$$
\mbox{MMD}^2(\{ \x_i \}_{i=1}^N, \{ \y_j \}_{j=1}^M) = \frac{1}{N^2} \sum_{i, j =1}^N k(\x_i, \x_j) + \frac{1}{M^2} \sum_{i, j =1}^M k(\y_i, \y_j) - \frac{2}{NM}  \sum_{i=1}^N  \sum_{j=1}^M  k(\x_i, \y_j),
$$
with respect to CPU time. 
Here, we use a polynomial kernel $k(\x, \y) = (\x^T \y /3 +1)^3$. The set $\{ \x_i\}_{i=1}^N$ consists of the $N$ particles generated by different ParVI methods, while $\{ \y_j\}_{j=1}^M$ contains $M=5000$ reference samples drawn from $\rho^*$ using Langevin Monte Carlo (LMC). 
Thus, MMD$^2$ provides a quantitative measure of sampling quality over time, complementary to the objective-based quantity $\mathcal{F}_h$.

The first column in Fig. \ref{doublebanana}(a)-(c) shows the particles obtained from ImEQ after 200 iterations, while columns 2 and 3 show the MMD$^2$ and discrete free energy $\mathcal{F}_h$ of each method as functions of CPU time for all three cases.  
Since the final particle distributions are similar across different methods, we present only the results from ImEQ in the first column.  
As shown in Fig. \ref{doublebanana}, the proposed ImEQ method exhibits superior computational efficiency compared to EVI-Im and AdaGrad across all three cases. In the ``Star'' and ``Eight-component'' cases, AEGD achieves the best CPU time performance: since it can use the same step size as ImEQ in these cases (${\rm lr}=0.01$ and ${\rm lr}=0.1$, respectively) while requiring no inner optimization loop, its fully explicit nature makes it faster per iteration. In contrast, in the ``Double-banana'' case, AEGD requires a much smaller step size (${\rm lr}=0.001$) to converge stably, whereas ImEQ remains stable at ${\rm lr}=0.01$; this is why ImEQ is more efficient than AEGD in that case. Overall, ImEQ offers efficiency comparable to AEGD while being more robust to the choice of step size, as further illustrated in Fig.~\ref{doublebanana2}.

Compared with EVI-Im, ImEQ significantly reduces the CPU time when the same learning rate is used.
To further quantify the computational efficiency and accuracy of ImEQ relative to EVI-Im, Table~\ref{comparediffN} reports the steady-state MMD$^2$, the steady-state discrete free energy $\mathcal{F}_h$, and the corresponding CPU time for different numbers of particles. Here, the steady state is defined as the first iterate for which the change in $\mathcal{F}_h$ between two consecutive iterations is below $10^{-5}$.
In this comparison, we set the learning rate to ${\rm lr}=0.01$ for both methods. We note that EVI-Im can often be run with a larger step size (e.g., ${\rm lr}=0.1$ in Fig.~\ref{doublebanana}), but we use the same ${\rm lr}$ here to enable a direct per-iteration cost comparison.
Overall, ImEQ achieves comparable MMD$^2$ values while requiring substantially less CPU time for particle numbers $N\ge 100$.
Moreover, the speedup becomes more pronounced as $N$ increases.

  \begin{table}[!h]
    \centering
    \caption{Comparison of ImEQ and EVI-Im methods for CPU time (seconds), MMD$^2$ and $\mathcal{F}_h$  with different particle numbers in the ``Double-banana'' case.} 
    \begin{tabular}{|c|c|c|c|c|c|c|c|c|c|}
    \hline
     &  \multicolumn{3}{|c|}{N=100} & \multicolumn{3}{|c|}{N=200} & \multicolumn{3}{|c|}{N=500} \\
    \hline
    \  & Time & MMD$^2$ & $\mathcal{F}_h$ & Time & MMD$^2$ & $\mathcal{F}_h$  &  Time & MMD$^2$ & $\mathcal{F}_h$ \\
    \hline
    ImEQ & {\bf 0.16} & 0.020 & -0.625 & {\bf 0.34} & 0.024 & -0.727 & {\bf 1.46} & 0.023 & -0.789  \\
    \hline
    EVI-Im & 2.31 & 0.022 & -0.628  & 6.76 & 0.025 & -0.727 & 36.61 & 0.027 & -0.790  \\
    \hline
    \end{tabular}
    
    \label{comparediffN}
    \end{table}

As mentioned in previous sections, both AEGD and ImEQ need to take a smaller step size for the algorithm to work in some cases. Fig.~\ref{doublebanana2} compares the performance of the two algorithms on the ``Double-banana'' example across a range of step sizes. We observe that when the step size is below $10^{-2}$, ImEQ is able to converge to a local minimizer of the discrete free energy $\mathcal{F}_h$. In contrast, AEGD typically requires a smaller step size, below $10^{-3}$, to reach a comparable minimizer; for larger step sizes it tends to become unstable. This suggests that ImEQ is more robust, by treating the potential term implicitly.
\begin{figure*}[!ht]
\centering
 \begin{minipage}{0.45 \linewidth}
  \begin{overpic}[width=\linewidth]{./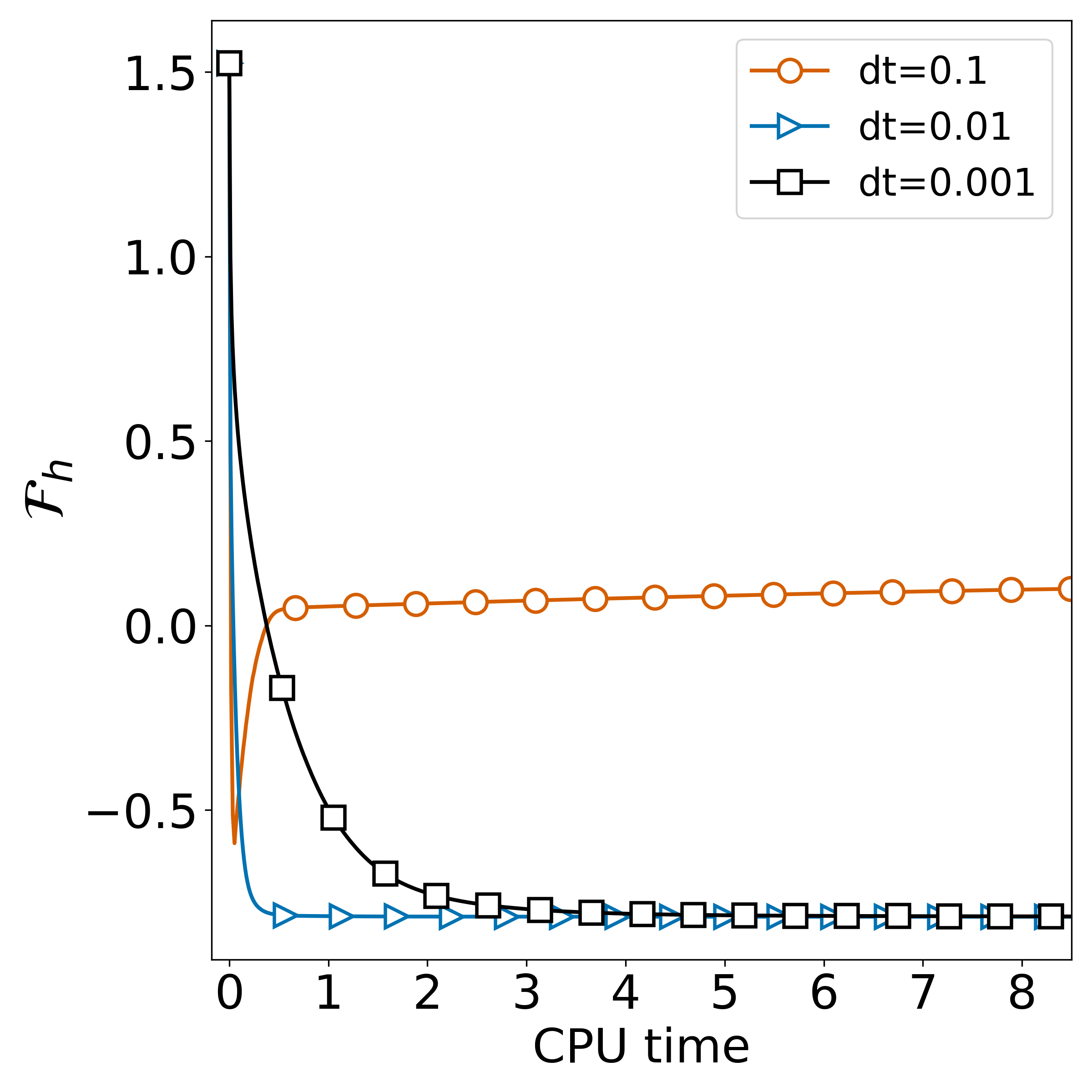}
    \put(-3, 94){{\scriptsize (a)}}
    \end{overpic}
   \end{minipage}
   \hspace{1.5 em}
    \begin{minipage}{0.45 \linewidth}
       \begin{overpic}[width=\linewidth]{./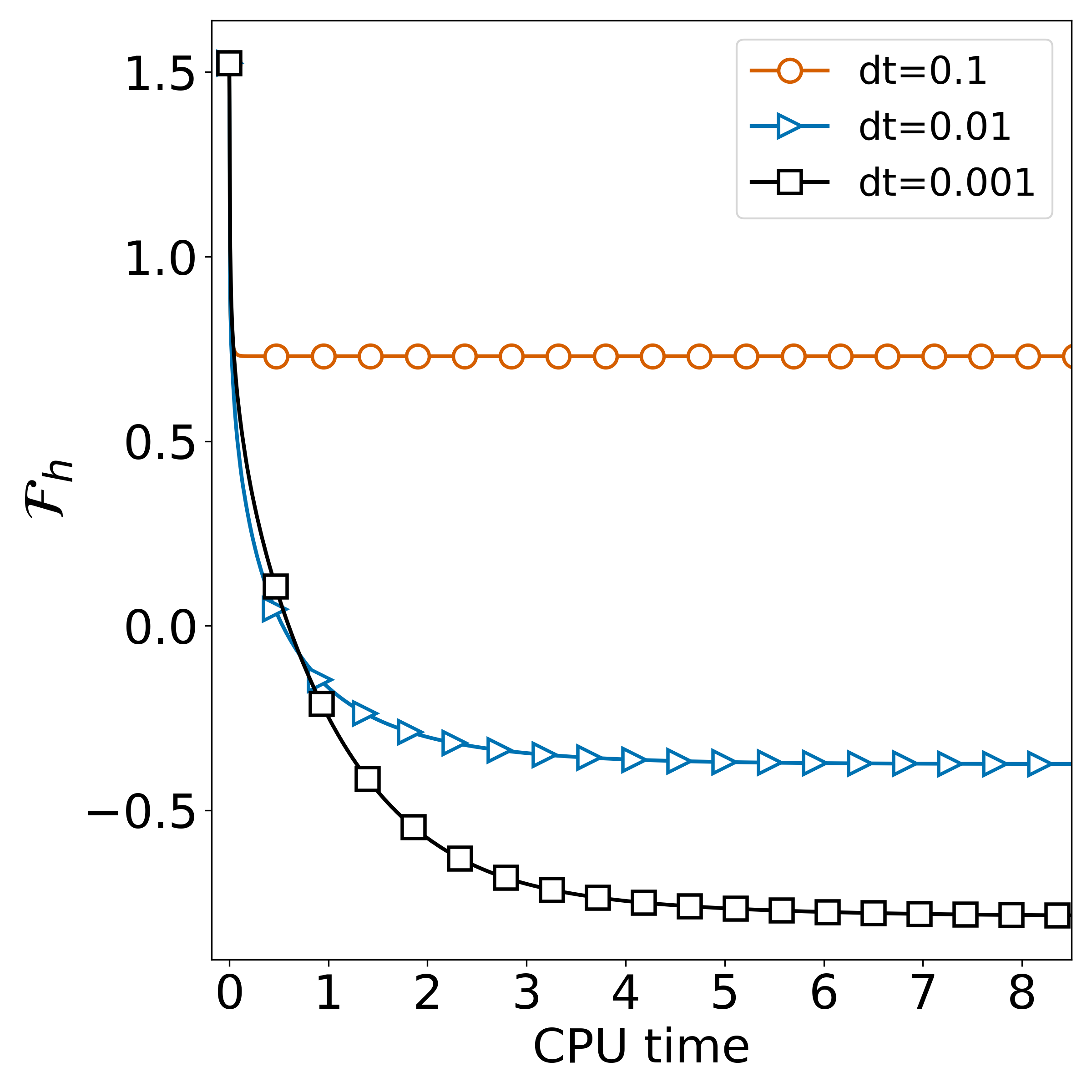}
    \put(-3, 94){{\scriptsize (b)}}
    \end{overpic}
 \end{minipage}
 \caption{(a): $\mathcal{F}_h$ with respect to CPU time for different learning rates for ImEQ. (b): $\mathcal{F}_h$ with respect to CPU time for different learning rates for AEGD. Here, markers are displayed every 20 data points for different curves.}\label{doublebanana2}
\end{figure*}

Next, we consider a more challenging task by initializing the particle distributions for all methods significantly away from the target distribution. This setup better reflects real-world applications, where the mean or variance of the target distribution is often unknown or difficult to estimate. Specifically, we define the target distribution as a star-shaped, five-component Gaussian mixture \eqref{stardistribution}, and the initial distribution as $\mathcal{N}((5, 5), \mathsf{I})$. 

\begin{figure*}[ht]
  \centering
 \begin{minipage}{0.65 \linewidth}
\includegraphics[width = \linewidth]{./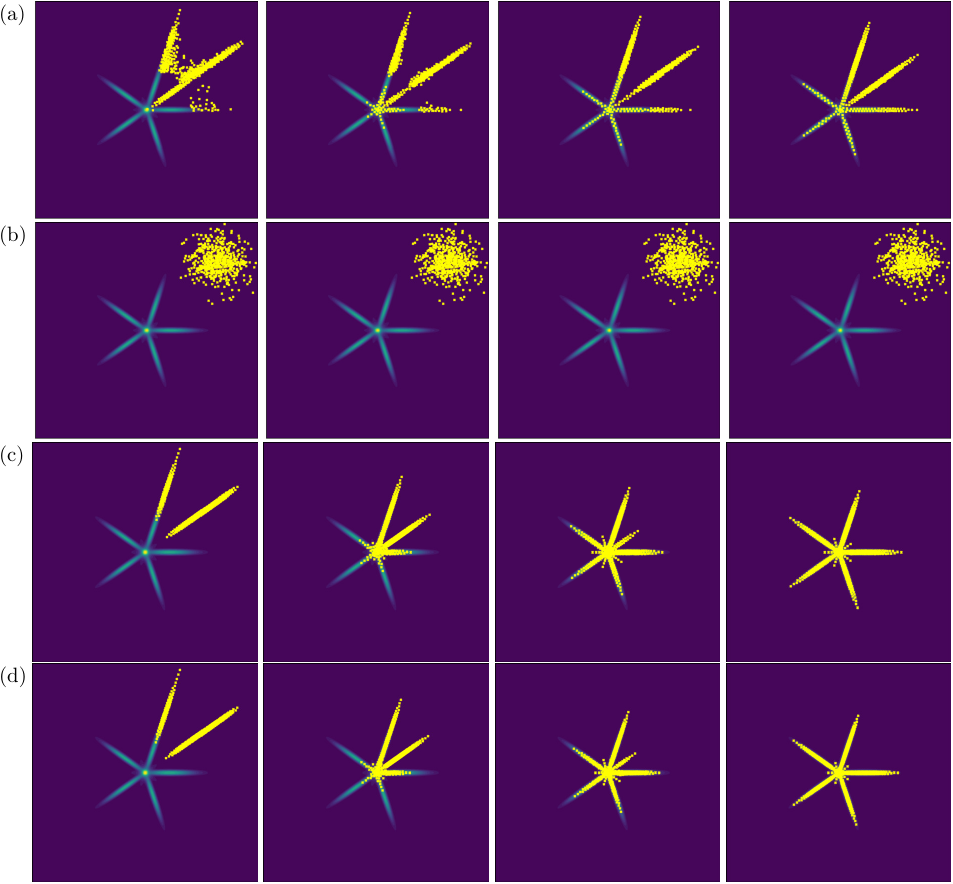} 
\end{minipage}
 \hfill
 \begin{minipage}{0.3 \linewidth}
  \begin{overpic}[width=\linewidth]{./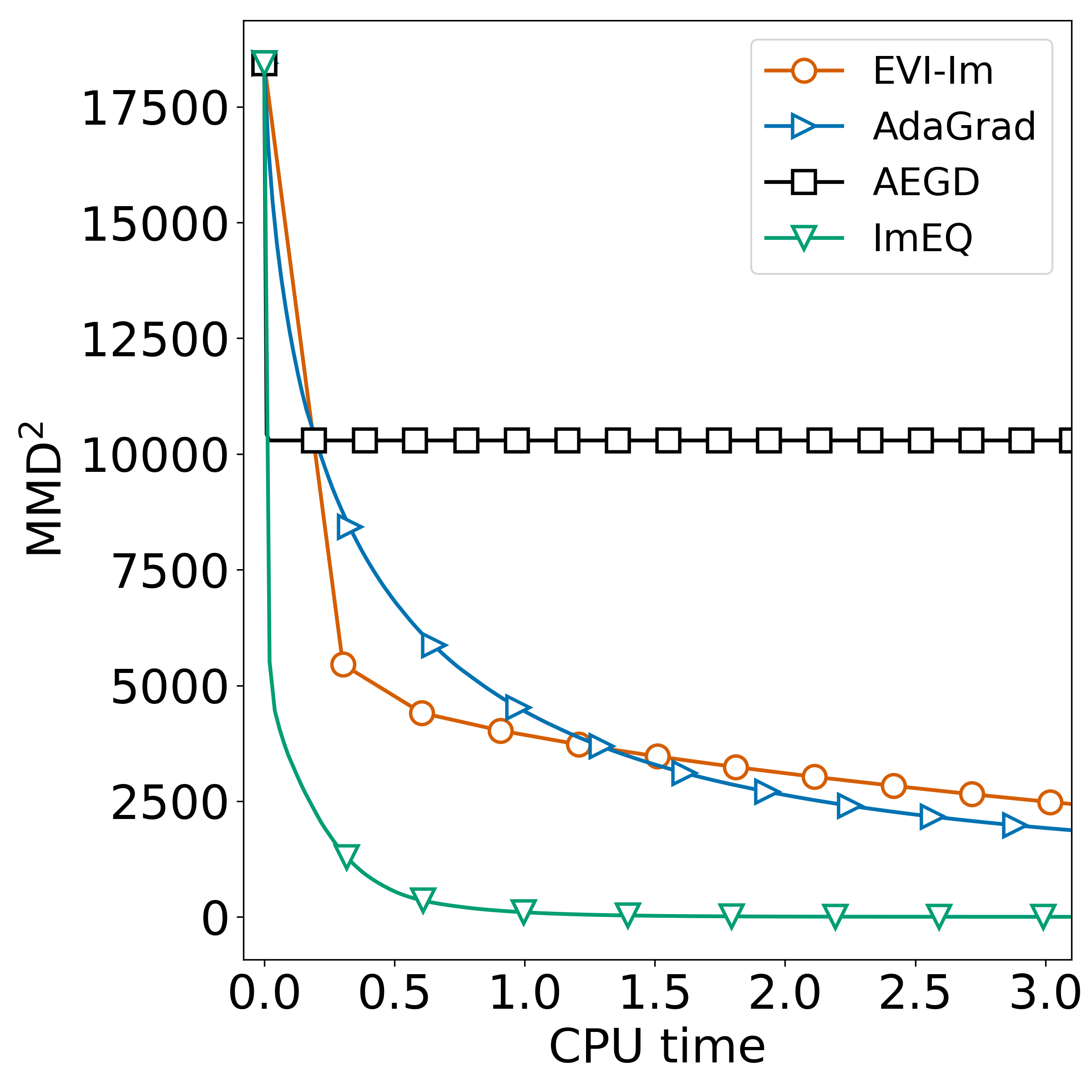}
    \put(-2, 92){{\scriptsize (e)}}
    \end{overpic}
   
     \vspace{0.5 em}
       \begin{overpic}[width=\linewidth]{./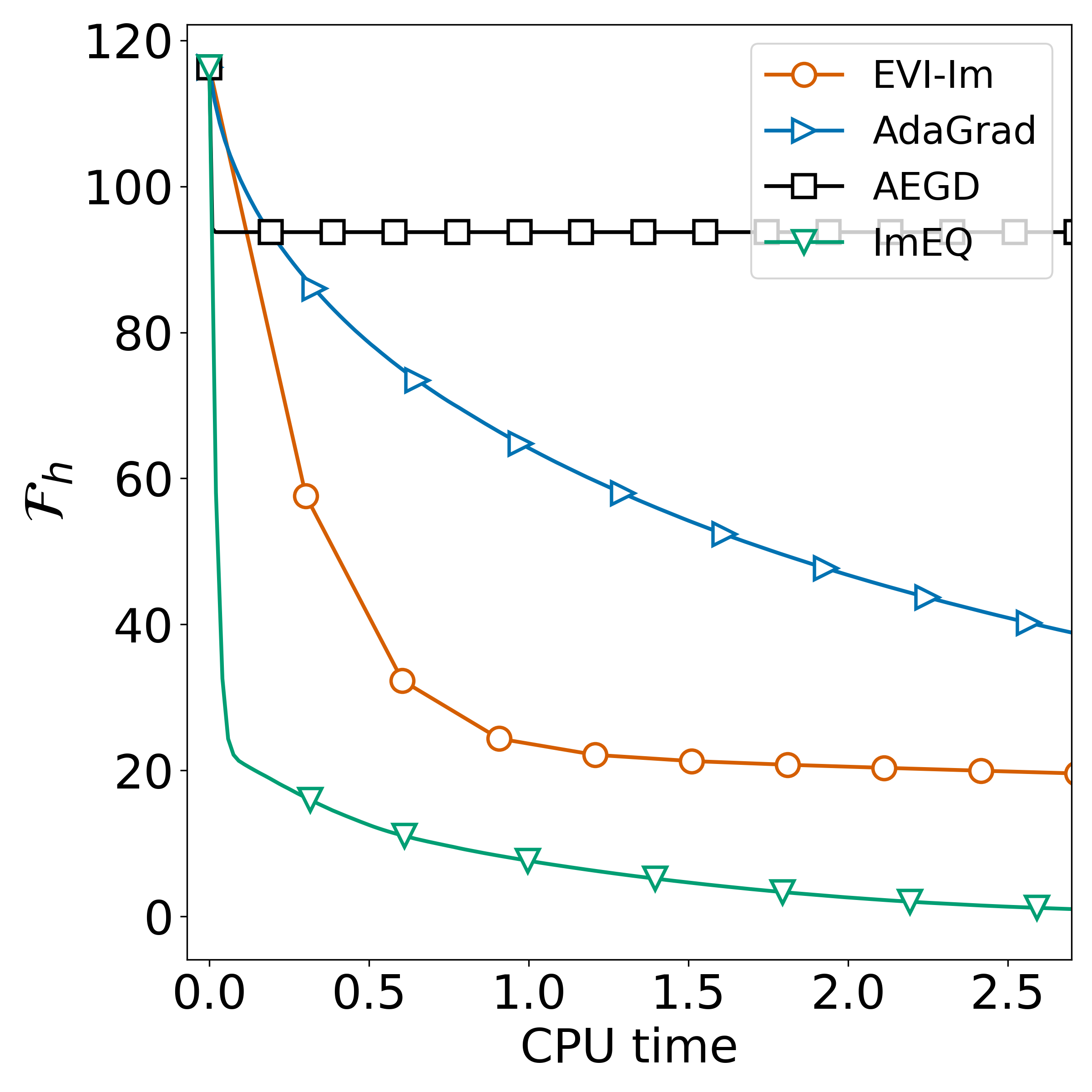}
    \put(-2, 92){{\scriptsize (f)}}
    \end{overpic}
 \end{minipage}
\caption{``Star'' case with the initial distribution set as a Gaussian distribution with a nonzero mean.  
(a)-(b): Particles obtained by AdaGrad and AEGD at iterations 500, 1000, 2000, and 5000 (from left to right).  
(c)-(d): Particles obtained by EVI-Im and ImEQ at iterations 20, 100, 200, and 500 (from left to right).  
(e)-(f): Plots of MMD$^2$ and $\mathcal{F}_h$ as functions of CPU time for different methods. Markers are displayed every 20 data points for AdaGrad, AEGD, and ImEQ curves.}
  \label{star_500_1}
\end{figure*}

Fig. \ref{star_500_1}(a)-(d) show the distribution of particles generated by different methods at various iterations with $N = 500$. The learning rate is set to 0.1 for AdaGrad and 0.01 for all other methods.
The results show that EVI-Im and ImEQ achieve similar approximations to the target distribution at different iterations, and their dynamics are nearly identical. However, ImEQ is much more efficient in terms of CPU time. Both methods perform better than AdaGrad and AEGD. Interestingly, in this case, AEGD fails to explore the star-shaped target distribution. We also tested various other learning rates (${\rm lr} = 0.1, 0.01, 0.001, 10^{-4}$) for AEGD, which yielded nearly identical results. This shows that keeping the potential part $H = \frac{1}{N} \sum_{i=1}^N V(\x_i)$ implicit, as in ImEQ, is necessary when the initial distribution is far from the target distribution.

To evaluate ImEQ on a heavy-tailed target, we consider a 2D Student’s t distribution:
$$\rho^*(\x) \propto \left(1 + \frac{\|\x - \bm{\mu}\|^2}{\nu s^2}\right)^{-\frac{\nu+2}{2}} \mbox{with } \bm{\mu}=\bm{0}, s = 1, \nu = 3,$$
which leads to 
$$
V(\x) = \frac{5}{2} \ln \left(1 + \frac{x_1^2 + x_2^2}{3}\right), \mbox{with } \x=(x_1, x_2). 
$$
We initialize particles from a standard Gaussian. We set the number of particles to $N = 500$, the kernel
bandwidth to $h = 0.4$ (chosen based on \eqref{eq:h_scaling_guideline} and a few trials), the constant $C=10$ and the learning rate $\rm lr = 0.01$ as before.

Since tail coverage is a primary diagnostic for heavy-tailed targets, we compute the empirical tail probability
\[
\widehat P_R=\frac{1}{N}\sum_{i=1}^N \mathbf{1}_{\{\|x_i\|>R\}},
\]
and compare it with the ground-truth tail probability $P_R^{\mathrm{True}}:=\mathbb{P}_{\rho^*}(\|X\|>R)$.
The ground-truth values are estimated by Monte Carlo sampling using $10^6$ i.i.d.\ samples from $\rho^*$:
$$
P_2^{True} = 0.2808, \ P_3^{True} = 0.1252, \ P_4^{True} = 0.0627, \ P_5^{True} = 0.0352.  
$$
At the steady state, the estimated tail coverages are
$$
\hat{P}_2 = 0.268, \ \hat{P}_3 = 0.096, \ \hat{P}_4 = 0.048, \ \hat{P}_5 = 0.000.  
$$

The particles produced by ImEQ at the steady state are shown in Fig.~\ref{studentdistribution}(a). Figs.~\ref{studentdistribution}(b--c) report the evolution of $\mathrm{MMD}^2$ and the discrete KL-type objective $\mathcal{F}_h$ versus CPU time. When computing $\mathrm{MMD}^2$ for this heavy-tailed case, we employ the inverse multiquadric (IMQ) kernel
\[
k(x,y)=(c+\|x-y\|^2)^{-\beta},\qquad c=1,\ \beta=\tfrac12,
\]
which is commonly used as a tail-sensitive discrepancy measure.

Overall, this example indicates that ImEQ remains stable and provides reasonable tail coverage for moderate thresholds (e.g., $R\le 4$) for $N = 500$. We also emphasize that combining a KL divergence with a Gaussian kernel is not ideal for deep-tail characterization: KL-based criteria can under-emphasize rare tail events in finite-sample regimes, while Gaussian kernels are inherently local and may not sufficiently promote exploration of far-tail regions. Importantly, the ImEQ framework is not tied to this particular choice. The same methodology can be formulated with alternative divergences/energies (e.g., tail-sensitive discrepancies) and with kernels better suited to heavy-tailed geometries (e.g., heavier-tailed kernels such as IMQ), potentially improving deep-tail coverage. A systematic investigation of such extensions is beyond the scope of this article and is left for future work.

\begin{figure}[ht]
\centering
\begin{overpic}[width=\linewidth]{./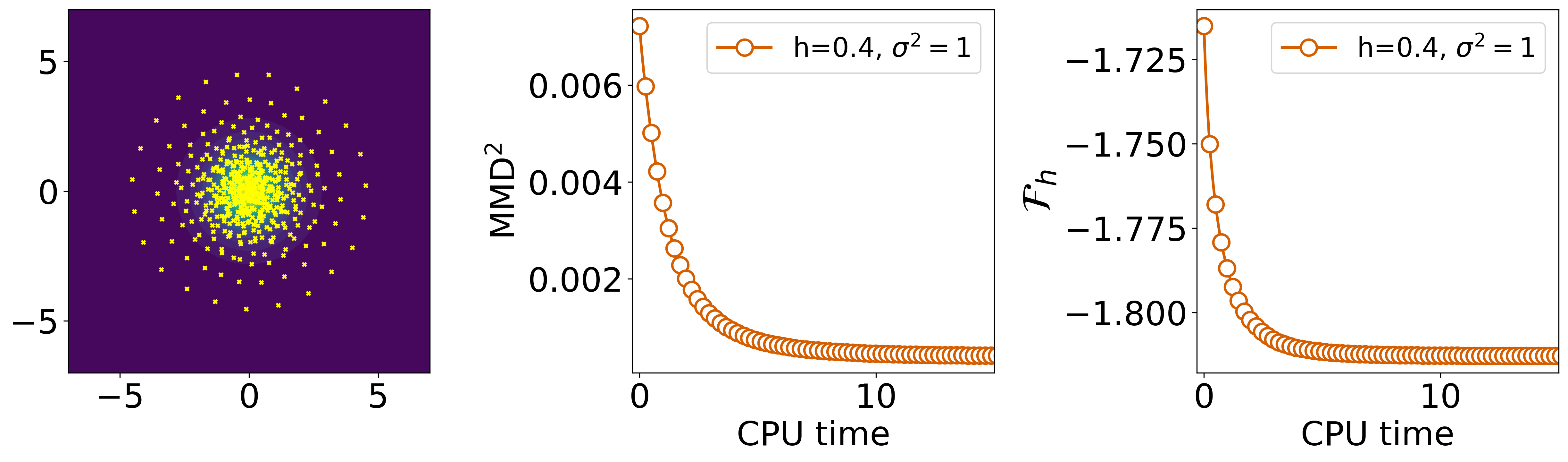}
 \put(0, 30){{\scriptsize (a)}}
  \put(30, 30){{\scriptsize (b)}}
   \put(70, 30){{\scriptsize (c)}}
    \end{overpic}
 \caption{(a):  Particles obtained by ImEQ at the steady state; (b): Plot of IMQ ${\rm MMD}^2$ with respect to CPU time; (c): Plot of $\mathcal{F}_h$ with respect to CPU time.}
\label{studentdistribution}
\end{figure}

\subsection{Bayesian Logistic Regression with Real Data}\label{sec:BLR}
Consider a Bayesian logistic regression model for binary classification. Given the data set 
$\{ \mathbf{c}_t, y_t\}_{t=1}^{\tilde{N}}$ with $\tilde{N}$ the number of training entries, the logistic regression model is defined as  
\[
p(y_t = 1 \mid \mathbf{c}_t, \bm{\omega}) = \left[ 1 + \exp(-\bm{\omega}^T \mathbf{c}_t) \right]^{-1}.
\] 
The unknown parameters $ \bm \omega$ are the regression coefficients, whose prior is $N(\bm \omega; \mathbf{0}, \alpha \mathbf{I})$ with $\alpha = 1$. In the Bayesian logistic regression model, we compare the performance of ImEQ method with EVI-Im, AEGD, the classical AdaGrad and SVGD methods.
For all methods, we set the number of particles $N = 100$, 
the learning rate to 0.1, and the bandwidth to $h=0.1$. 
For ImEQ and AEGD methods, we set $C=5$.  
It is important to note that these parameters may not be optimal for all methods. 
Additionally, as noted in the context of ImEQ and EVI-Im methods,
we need to solve a minimization problem to update the positions of the particles at each time step (outer loop).
Following the approach outlined in Ref. \cite{wang2021particle}, for the case of Bayesian logistic regression with real data, it is not necessary for the algorithms to achieve exact local optimality in each iteration.
Thus, to minimize the functional in the inner loop, we employ the stochastic gradient descent method AdaGrad  \cite{duchi2011adaptive} with a learning rate of 
$\rm lr = 0.1$ at each time step. Again, we limit the maximum number of iterations for the inner loop to 20 to reduce computational cost.

\begin{figure}[htb]
  \centering
\begin{overpic}[width=\linewidth]{./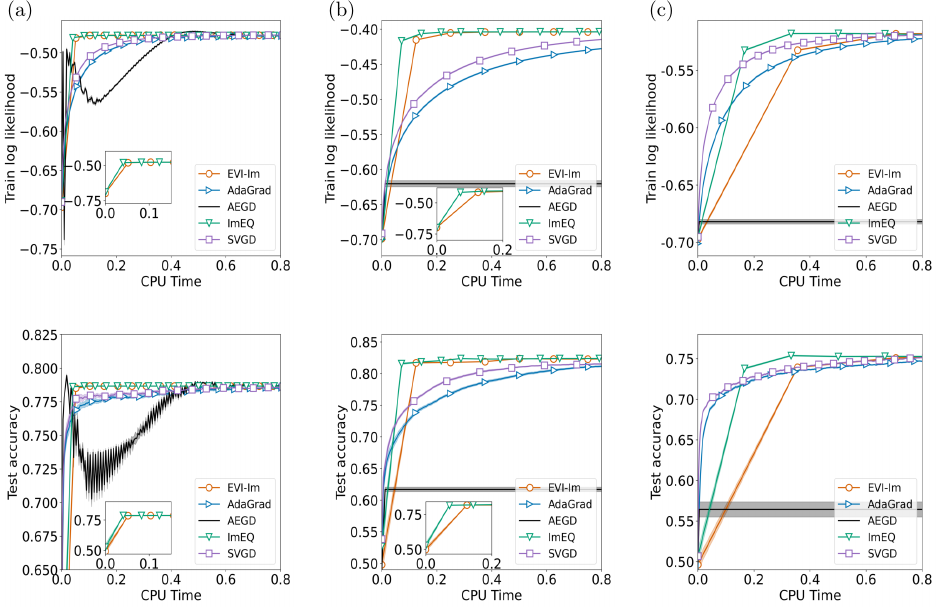}
    \end{overpic}
\caption{The training log-likelihood and test accuracy of the ``Diabetes'' (a), ``Image'' (b) and ``Covertype'' (c) datasets returned by different methods. Markers are displayed every 20 data points for AdaGrad and SVGD curves.}
\label{Bayeslr}
\end{figure}

Fig. \ref{Bayeslr} (a)-(b) show the training log-likelihood and test accuracy on datasets
``Diabetes'' (468 training entries, 8 features) and ``Image'' (1300 training entries, 18 features) 
for various methods with respect to the CPU time. For each method, 
a total of 20 simulations were performed. 
The $x$ axis represents the average CPU time across the 20 simulations for all methods under comparison. 
The solid lines indicate the mean values of test accuracy and training log-likelihood, while the shaded regions represent the standard error across the 20 simulations.

As shown in Fig. \ref{Bayeslr} (a)-(b), the test accuracies of different methods  converge towards similar values in the end. 
However, AEGD method exhibits significant fluctuations and larger standard errors, with notably lower test accuracy for the dataset ``Image''. Despite testing various learning rates for AEGD method ($\rm lr = 0.1, 0.01, 0.001, 10^{-4}$), we observed that they produced similar performance. 
Remarkably, ImEQ and EVI-Im methods demonstrate slightly higher log-likelihoods with lower CPU time. 
These methods also exhibit the smallest standard errors and fluctuations among the five methods, which may be attributed to the two-layer loop structure in both ImEQ and EVI-Im. 
Notably, ImEQ method shows a slight advantage over EVI-Im, achieving nearly the same log-likelihood with less CPU time. 
This is consistent with the results from the toy examples, where ImEQ method outperforms EVI-Im when the particle number $N\geq 100$.

We then consider a large dataset ``Covertype'' \cite{wang2019stein}, which contains 581,012 data entries and 54 features.  The prior of the unknown regression coefficients is also $p(\bm \omega) = N(\bm \omega; \mathbf{0}, \alpha \mathbf{I})$ with $\alpha = 1$.
Due to the large dataset size, the computation of log-likelihood $\nabla \ln \rho^*$ is expensive. Therefore, as noted in Ref. \cite{wang2021particle}, we randomly sample a batch of data to estimate a stochastic approximation of $\nabla \ln \rho^*$, and thus,  
the algorithms do not need to achieve the exact local optimality in each iteration. 
As before, for ImEQ and EVI-Im methods, we use the AdaGrad algorithm with learning rate $\rm lr = 0.1$ to minimize the functional in the inner loop, and we set the maximum number of iterations for the inner loop to 100. We also compare ImEQ method with EVI-Im, AEGD, the classical AdaGrad and SVGD methods, setting the bandwidth to $h=0.05$. All other parameters are as specified above. 
For each method, we conducted a total of 20 simulations, with the data randomly partitioned into training (80$\%$) and testing (20$\%$) sets in each simulation.

Fig. \ref{Bayeslr} (c) presents the test accuracy and training log-likelihood of the training data for each method with respect to the CPU time for the ``Covertype'' dataset. The results show that test accuracies of EVI-Im, SVGD, and AdaGrad methods converge to similar values, slightly lower than that of ImEQ method. While for AEGD method, it achieves relatively lower test accuracy and log-likelihood.
As before, we tested various learning rates ($\rm lr = 0.1, 0.01, 0.001, 10^{-4}$) for AEGD method, all yielding similar performance. 
Therefore, we only present the results with $\rm lr=0.1$ in this case. 
Based on these findings, we conclude that AEGD method does not show advantages in this Bayesian logistic regression setting. 
The proposed ImEQ method outperforms the other methods, particularly for the large data (``Covertype'' case), where it achieves relatively high test accuracy and log-likelihood with reduced CPU time.

\subsection{Bayesian Neural Network}

In this subsection, 
we compare the performance of ImEQ with other methods on Bayesian neural networks \cite{detommaso2018stein, liu2016stein}. The models are trained on the UCI datasets, following the experimental set-up described in Ref. \cite{liu2016stein}. 
Specifically, we employ neural networks with one hidden layer of 50 hidden units. 
All the datasets are randomly partitioned into $90\%$ for training and $10\%$ for testing, and the results are repeated across 30 random trials.

\begin{figure}[htb]
  \centering
  \begin{overpic}[width=0.45\linewidth]{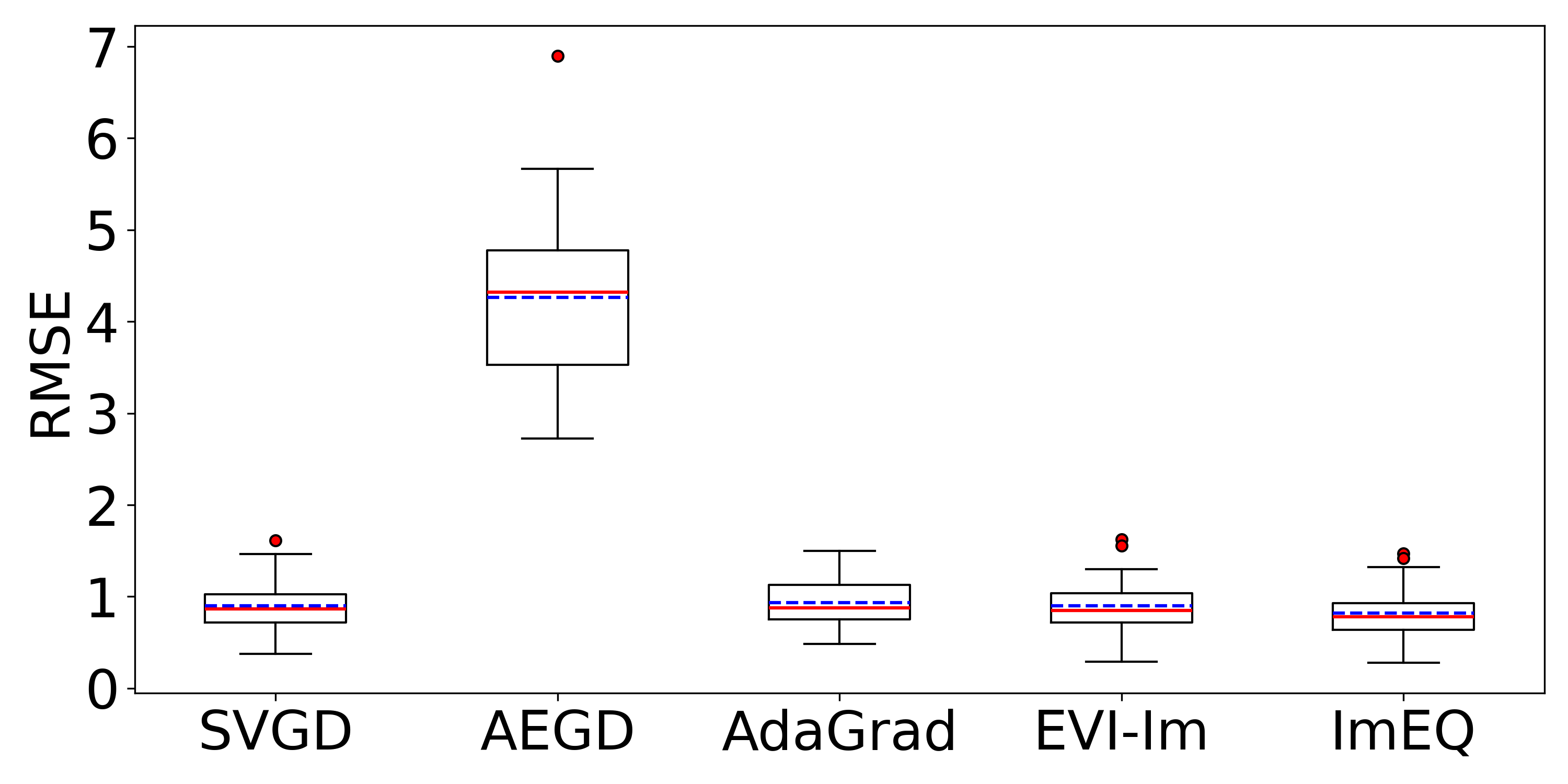}
  \put(-5, 50){(a)}
 \end{overpic}
   \includegraphics[width=0.45\linewidth]{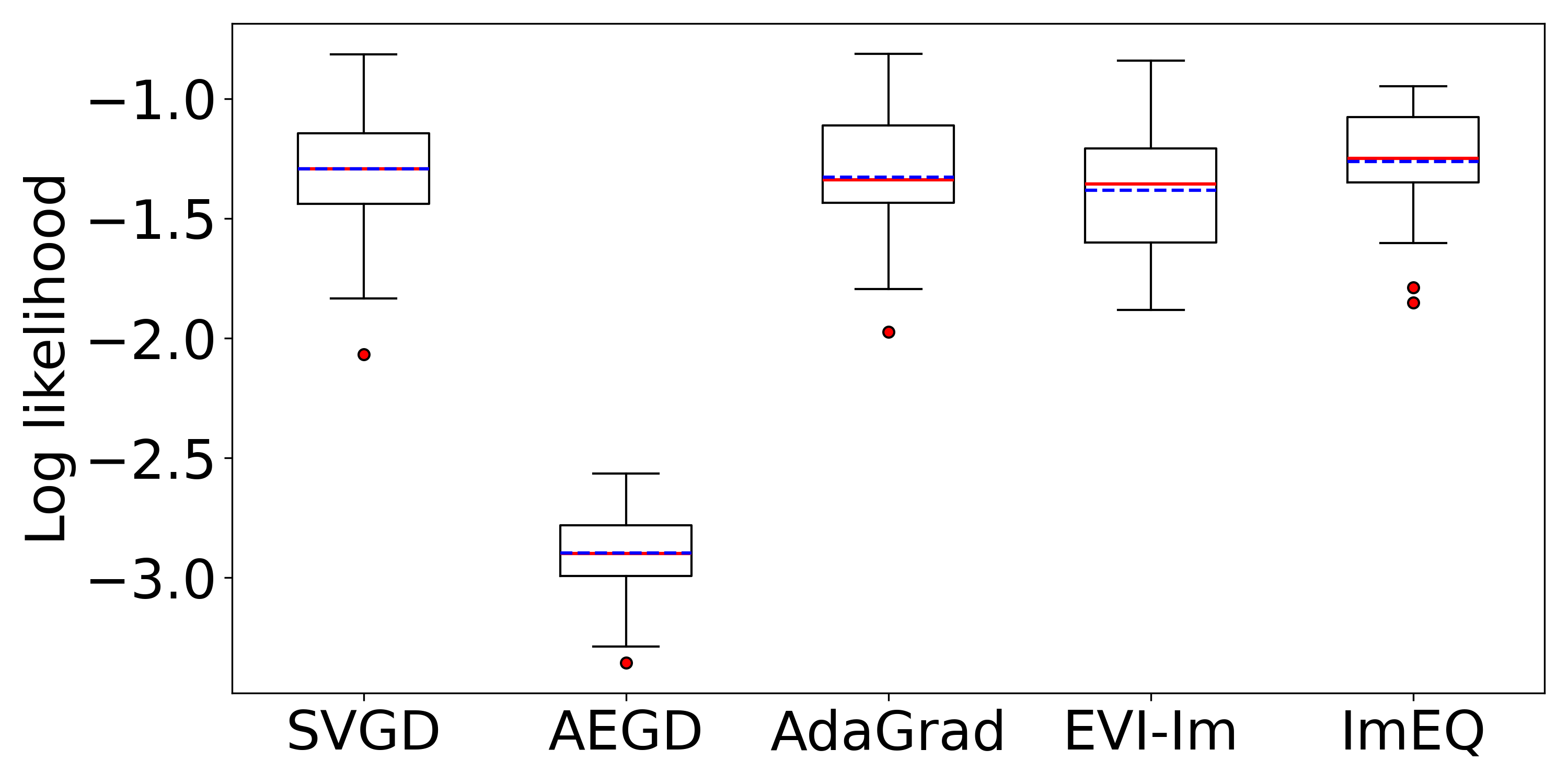}\\

\vspace{0.3cm}
   
  \begin{overpic}[width=0.45\linewidth]{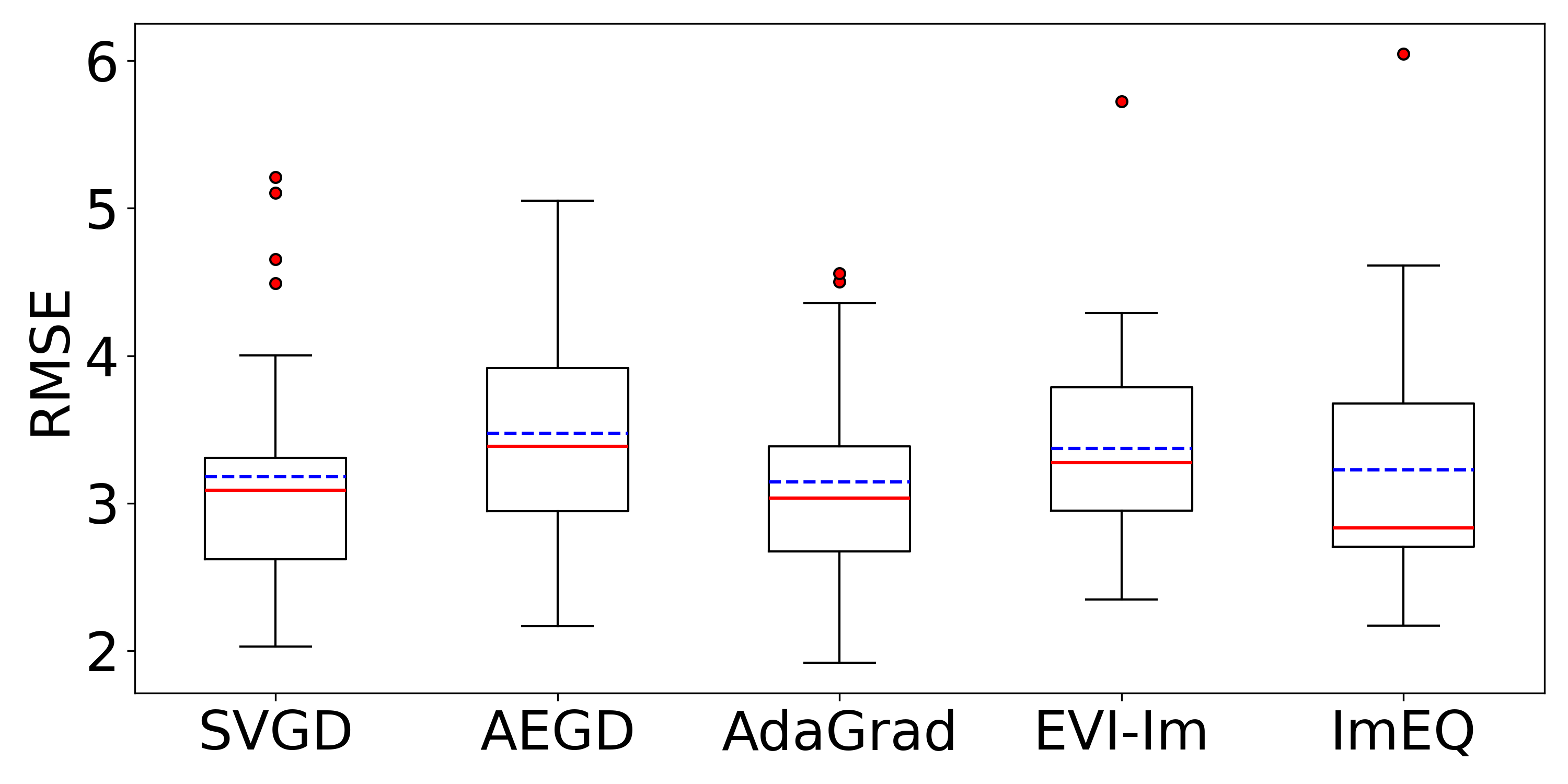}
  \put(-5, 50){(b)}
 \end{overpic}
   \includegraphics[width=0.45\linewidth]{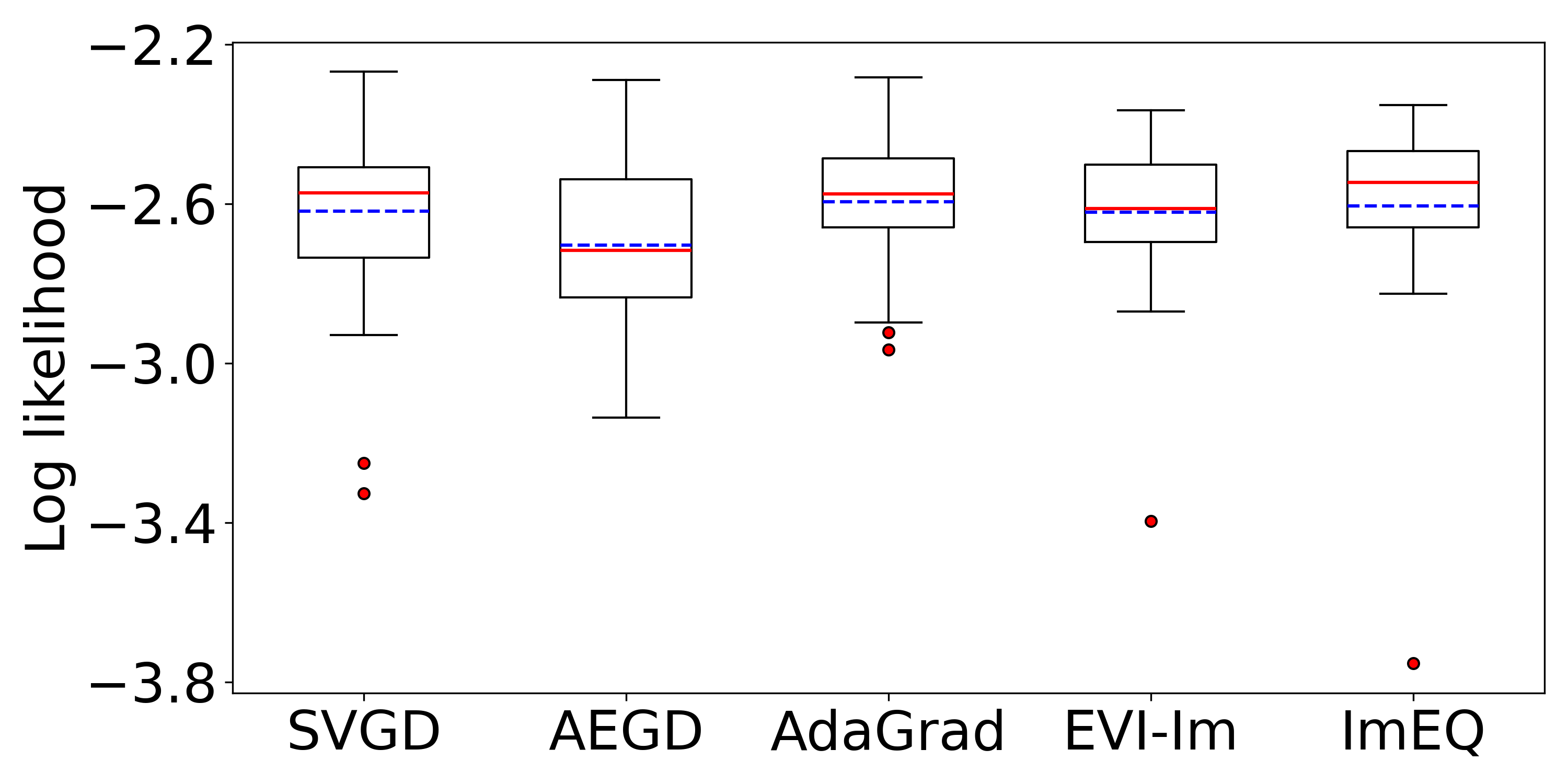}\\

\vspace{0.3cm}

   \begin{overpic}[width=0.45\linewidth]{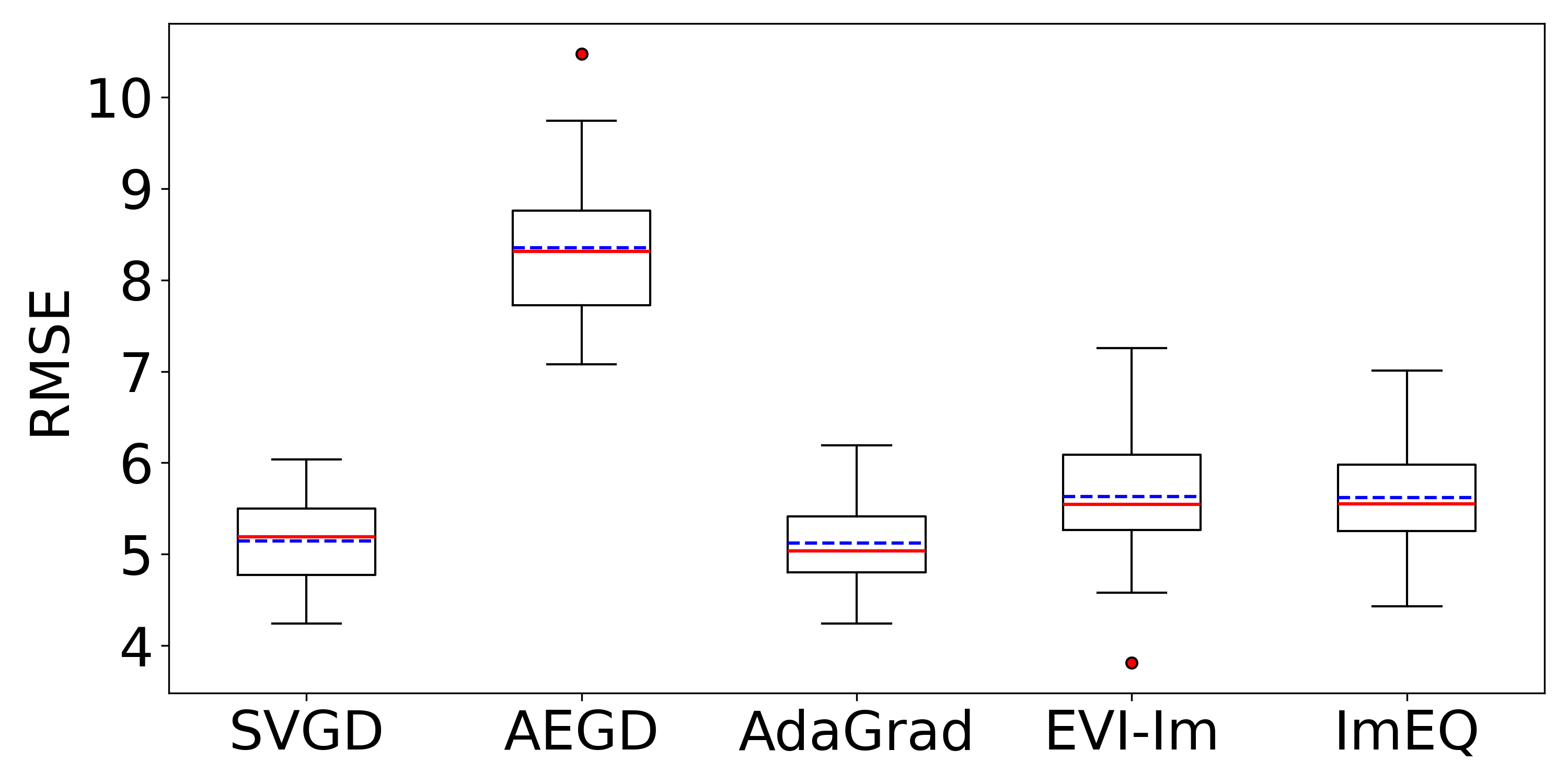}
   \put(-5, 50){(c)}
 \end{overpic}
   \includegraphics[width=0.45\linewidth]{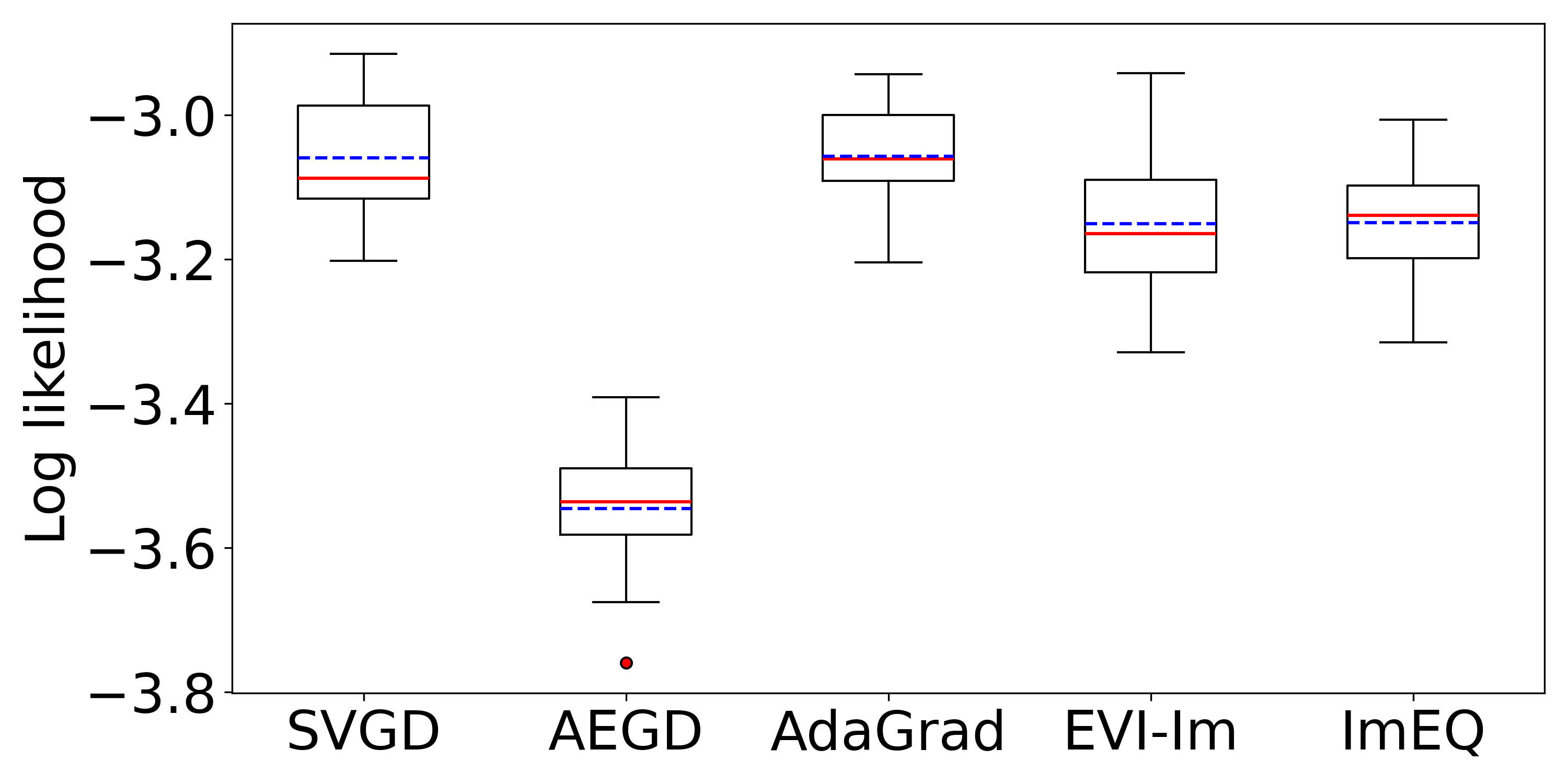}\\
   \caption{ Box plots of RMSE (left) and predictive log-likelihood (right) for different datasets: (a) ``Yacht Hydrodynamics'', (b) ``Boston Housing'', and (c) ``Concrete Data''.}
   \label{yacht}
\end{figure}

\begin{table}
  \centering
\caption{Comparison of RMSE, log-likelihood and average CPU time (seconds) for different methods.}  \label{bnnresult}
  \begin{tabular}{|c|c|c|c|c|c|}
  \hline
  \  & \multicolumn{5}{|c|}{\textbf{Avg. Test RMSE}}\\
  \hline
  \textbf{Dataset} & \textbf{SVGD} & \textbf{AEGD} & \textbf{AdaGrad} & \textbf{EVI-Im} & \textbf{ImEQ}\\
  \hline
  Yacht & 0.899 $\pm$ 0.051 & 4.265 $\pm$ 0.162 & 0.934 $\pm$ 0.048 & 0.900 $\pm$ 0.055  & 0.822 $\pm$ 0.050 \\
  \hline
  Boston & 3.178 $\pm$ 0.149 & 3.473 $\pm$0.133 & 3.143 $\pm$ 0.117  & 3.369 $\pm$ 0.123 &  3.226 $\pm$ 0.146 \\
  \hline
  Concrete & 5.141 $\pm$ 0.090 & 8.353 $\pm$ 0.144 & 5.119 $\pm$ 0.085  & 5.631 $\pm$ 0.133 &  5.621 $\pm$  0.109 \\
  \hline
  \  &  \multicolumn{5}{|c|}{\textbf{Avg. Test LL}} \\
  \hline
  Yacht & -1.293 $\pm$ 0.048 & -2.896 $\pm$ 0.034  & -1.327 $\pm$ 0.046  & -1.381 $\pm$ 0.047 & -1.262 $\pm$ 0.039 \\
  \hline
  Boston & -2.618 $\pm$ 0.043 & -2.704  $\pm$ 0.038 & -2.595 $\pm$ 0.029  & -2.620 $\pm$ 0.034 &  -2.605 $\pm$ 0.045 \\
  \hline
  Concrete & -3.059 $\pm$ 0.013 &-3.545  $\pm$ 0.014 &  -3.057 $\pm$ 0.012  & -3.150 $\pm$ 0.017 & -3.149  $\pm$  0.015 \\
  \hline
  \  &  \multicolumn{5}{|c|}{\textbf{Avg. Time (Secs)}} \\
  \hline
  Yacht & 30.26 & 35.18 & 26.83  & 26.69 & {\bf 24.12} \\
  \hline
  Boston & 47.14  & 61.30 & 40.53  & 40.72 & {\bf 35.93} \\
  \hline
  Concrete & 31.38  & 36.37 &  33.16  & 26.96 & {\bf 24.07} \\
  \hline
  \end{tabular}
  \end{table}
  
For both ImEQ and EVI-Im methods, the mini-batch size is set to be 100, with different batches used only for different outer iterations. 
Following the approach in Section \ref{sec:BLR}, we apply the AdaGrad algorithm with a learning rate of ${\rm lr} = 0.1$ to minimize the functional in the inner loop, and the maximum number of iterations is set to be 100 at each time step. 
Additionally, for ImEQ and AEGD methods, we set the constant $C=50$. 
The optimal learning rates are chosen as $\rm lr = 0.01$ for both ImEQ and EVI-Im methods, and 
${\rm lr}=0.001$ for AEGD, AdaGrad and SVGD methods. The learning rate of the SVGD is the same as the settings in Ref. \cite{liu2016stein}.

Fig. \ref{yacht} shows box plots of the RMSE and the predictive log-likelihood of various methods for three different datasets. 
Tab. \ref{bnnresult} shows the average RMSE, log-likelihood and CPU time over these 30 runs, along with the standard errors. For ImEQ and EVI-Im methods, we report the results after 50 outer iterations. 
For the SVGD, AdaGrad, and AEGD methods, the corresponding number of iterations is 5000.

Fig. \ref{yacht} and Table \ref{bnnresult} show that, except for AEGD, all other methods demonstrate comparable performance across the three datasets. AEGD method, however, exhibits higher RMSE and lower log-likelihood, indicating that it may not be suitable for Bayesian neural networks. Moreover, the proposed ImEQ method shows an apparent advantage in terms of running time compared to the other four methods, highlighting its efficiency. Both the RMSE and the predictive log-likelihood for the ImEQ method are comparable to those of the SVGD method, and are even better for the ``Yacht'' and ``Boston'' datasets. 
This is particularly promising, considering that the SVGD method, as reported in Ref. \cite{liu2016stein}, outperforms the probabilistic back-propagation (PBP) algorithm for Bayesian neural networks.

\section{Conclusion}

In this article, we introduce a new particle-based variational inference (ParVI) method within the Energetic Variational Inference (EVI) framework. The proposed ImEQ method is an implicit algorithm that applies energy quadratization to part of the objective function, significantly reducing computational cost compared to EVI-Im algorithm \cite{wang2021particle}. Unlike the recently developed AEGD method, which uses energy quadratization for the entire energy to derive an explicit scheme, our method remains implicit, requiring an optimization problem to be solved at each time step. While ImEQ incurs slightly higher computational cost than AEGD, it offers enhanced stability.

We evaluate the effectiveness and robustness of ImEQ method on various synthetic and real-world problems, comparing it to existing ParVI methods, including EVI-Im, AEGD, AdaGrad, and SVGD.
Numerical results show that ImEQ is more efficient than EVI-Im and often exhibits better stability than AEGD in challenging examples. It also achieves competitive performance in Bayesian logistic regression and Bayesian neural networks, compared with other ParVI methods.
The proposed algorithm also has the potential to address other optimization problems in machine learning.

\section*{Acknowledgements}
The work of L. Kang, C. Liu, and Y. Wang was partially supported by their joint National Science Foundation (USA) grant NSF DMS-2153029.
Part of this work was done when X. Bao was visiting the Department of Applied Mathematics at the Illinois Institute of Technology. 
Bao would like to acknowledge the hospitality of the Illinois Institute of Technology. 
Part of this research was performed while Y. Wang and L. Kang were visiting the Institute for Mathematical and Statistical Innovation (IMSI) at University of Chicago in March 2025, which is supported by the National Science Foundation (Grant No. DMS-1929348).

\bibliography{VI}

\end{document}